\documentclass[sigconf,natbib=true]{acmart}
\usepackage{graphicx}
\usepackage{subcaption}
\usepackage{hyperref} 
\usepackage[capitalise]{cleveref}
\crefformat{equation}{(#2#1#3)} 
\usepackage{enumitem}
\usepackage{amsthm}
\usepackage{amsmath}
\usepackage{url} 
\usepackage{booktabs}  
\usepackage{array}
\usepackage{multirow}
\usepackage{multicol}
\usepackage{adjustbox}
\usepackage{tabularx, booktabs, array, ragged2e}
\usepackage{upgreek}
\usepackage{xcolor}
\usepackage[table]{xcolor}

\definecolor{lightblue}{RGB}{217, 237, 247}
\definecolor{lightgreen}{RGB}{229, 245, 224} 

\newtheorem{theorem}{Theorem} 
\newtheorem*{theorem*}{}
\newtheorem{proposition}{Proposition}  
\newtheorem*{proposition*}{}   
 
\newtheorem*{lemma*}{}
\AtBeginDocument{%
  }

\setcopyright{acmlicensed}
\copyrightyear{2026}
\acmYear{2026}
\acmDOI{XXXXXXX.XXXXXXX}
\acmConference[Conference acronym 'XX]{Make sure to enter the correct
  conference title from your rights confirmation email}{June 03--05,
  2018}{Woodstock, NY}
\acmISBN{978-1-4503-XXXX-X/2018/06}

\begin{document}

\title{Cross-Head Attention Uplift Network with Inverse Propensity Score under Unobserved Confounding}

\author{Haoran Zhang}
\affiliation{%
  \institution{Center for Applied Statistics and School of Statistics, Renmin University of China}
  \city{Beijing}
  \country{China}}
\email{zhrrhz2002@163.com}

\author{Chuanpu Li}
\affiliation{%
  \institution{Alibaba Group}
  \city{Beijing}
  \country{China}}

\author{Yuxin Fu}
\affiliation{%
  \institution{Alibaba Group}
  \city{Beijing}
  \country{China}}

\author{Bin Tong}
\affiliation{%
  \institution{Alibaba Group}
  \city{Beijing}
  \country{China}}

\author{Guan Wang}
\affiliation{%
  \institution{Alibaba Group}
  \city{Beijing}
  \country{China}}

\author{Bo Zheng}
\affiliation{%
  \institution{Alibaba Group}
  \city{Beijing}
  \country{China}}

\author{Feng Zhou$^{*}$}
\thanks{$^{*}$Corresponding author.}
\affiliation{%
  \institution{Center for Applied Statistics and School of Statistics, Renmin University of China}
  \city{Beijing}
  \country{China}}
\email{feng.zhou@ruc.edu.cn}

\renewcommand{\shortauthors}{Trovato et al.}

\begin{abstract}
Uplift modeling, crucial for estimating individual treatment effects (ITE), faces dual challenges: flexibly leveraging inter-group similarity to enhance discriminative power and debiasing under unobserved confounding scenarios. In this paper, we propose the Cross-Head Attention Uplift Network (CHAUN) and Robust Adversarial Inverse Propensity Score (RA-IPS) method to address these limitations. CHAUN employs shared feature embeddings and cross-head attention mechanisms to dynamically integrate treatment-specific and control-specific representations, enhancing inter-group correlation modeling. Theoretically, we prove that access to the true propensity scores ensures ITE identifiability even with unobserved confounders. For practical scenarios lacking true propensity scores, RA-IPS adversarially optimizes propensity weights within constrained uncertainty sets to mitigate bias from unobserved variables. Experiments on public datasets (CRITEO-UPLIFT, LAZADA) and a production e-commerce dataset demonstrate CHAUN’s superiority over state-of-the-art uplift models, achieving a relative improvements of up to $25.6\%$ in QINI scores. RA-IPS further enhances robustness, outperforming standard IPS by $5.4\%$ under unobserved confounding. 
The results validate the effectiveness of our proposed methods in real-world causal inference tasks.
\end{abstract}

\begin{CCSXML}
<ccs2012>
   <concept>
       <concept_id>10002951.10003227</concept_id>
       <concept_desc>Information systems~Information systems applications</concept_desc>
       <concept_significance>300</concept_significance>
       </concept>
 </ccs2012>
\end{CCSXML}

\ccsdesc[300]{Information systems~Information systems applications}

\keywords{Causal Inference, Uplift Modeling, Unobserved Confounders, Inverse Propensity Score}

\received{20 February 2007}
\received[revised]{12 March 2009}
\received[accepted]{5 June 2009}

\maketitle

\section{Introduction}

Uplift modeling, a causal inference technique for quantifying the marginal effect of treatments on individual behavior, plays a pivotal role in real-world applications such as advertisement~\cite{EUEN,ad}, user growth~\cite{UR,dr-cfr-mim}, and online marketing~\cite{descn,efin}.
The objective of uplift models is to estimate the causal effect of administering a treatment to an individual unit, formally defined as the Individual Treatment Effect (ITE), or uplift.
Uplift modeling enables prediction of ITE at the unit level, thereby facilitating the implementation of precision intervention strategies through data-driven decision frameworks.

We focus on the canonical scenario of binary treatment allocation, whereby observational units are exclusively assigned to either the treatment group or control group. Uplift modeling differs from conventional supervised learning approaches due to the inherent unobservability of ITE labels. For each observational unit, researchers can only empirically measure the factual outcome under either the treatment condition or the control condition, while the corresponding counterfactual outcome remains fundamentally inaccessible. Consequently, the accurate prediction of ITE requires valid counterfactual estimation frameworks for observational units.

One of the essential characteristics of uplift modeling lies in the inherent inter-group similarity between the prediction tasks for treatment group outcomes and control group outcomes. Although the supervision of the treatment group outcomes and control group outcomes is conducted independently, the two prediction tasks remain inherently correlated. Existing methods that incorporate treatment as an input feature~\cite{efin,ECUP} effectively leverage similarity patterns but fundamentally lack explicit mechanisms to sufficiently capture treatment-induced variation. In contrast, multi-head architectures~\cite{cfrnet,dragonnet} based on multi-task learning frameworks lack flexible utilization of such inter-group similarity.
Currently, no flexible architecture simultaneously emphasizes treatment-induced discrimination and exploits inter-group similarity.

Another issue to be addressed in uplift modeling is selection bias~\cite{Imbens_Rubin_2015}, which indicates the presence of confounding variables that simultaneously influence both treatment assignment and outcome responses across observational units. 
Without proper adjustment for confounders, it becomes challenging to isolate the treatment effect from the compounded variance attributable to multifactorial influences in observational causal inference. Traditional statistical research conventionally employs propensity score-based methodologies~\cite{rosenbaum1983,Robins2000,drl} for sample debiasing.
Recent uplift models based on neural network address selection bias through specialized architectures and tailored loss functions~\cite{cfrnet,dr-cfr,descn}. However, these methods fundamentally rely on the strong ignorability assumption~\cite{causality} (a.k.a. unconfoundedness), which posits that all confounding variables are captured within the observed covariates. The strong ignorability assumption serves as a sufficient condition for the identification of ITE~\cite{Imbens_Rubin_2015,pearl}, and typically holds when selection bias originates from observed covariates. 
In complex systems such as bidding advertising systems, the treatment assignment mechanism operates through a multi-stage causal pathway: advertisers' targeting policies determine eligibility for ad exposure, while the impression delivery is mediated by the bidding platform's allocation algorithms. This hierarchical selection process introduces latent confounders that remain unobservable to advertisers.
In scenarios with unobserved confounders, identifying the ITE and debiasing samples remains challenging.


In this paper, we address the two aforementioned challenges. To flexibly exploit inter-group similarity, we propose the {\bfseries C}ross-{\bfseries H}ead {\bfseries A}ttention {\bfseries U}plift {\bfseries N}etwork (CHAUN). The CHAUN framework first learns shared feature embeddings across users through representation alignment, then generates treatment-specific and control-specific deep latent representations via parallel encoding pathways. These dual-branch representations are adaptively fused through attention-weighted integration, where attention mechanisms dynamically determine cross-representation interaction weights. The synthesized representation is ultimately utilized for uplift prediction, enabling counterfactual inference by contrasting treatment-effect responses under treatment and control conditions. To mitigate selection bias, we incorporate inverse propensity score (IPS) weighting into the loss function and impose a global regularization constraint on the propensity scores to stabilize the weights.

To address debiasing in the presence of unobserved confounders, we establish that access to the true propensity scores enables the identification of ITE and unbiased IPS estimator. In practical operational settings where only nominal propensity scores estimated from observed features are accessible, we develop {\bfseries R}obust {\bfseries A}dversarial 
{\bfseries I}nverse {\bfseries P}ropensity {\bfseries S}cores (RA-IPS). We assume that the true propensity score is constrained to vary within a neighborhood of the nominal propensity score, and we optimize for the worst-case scenario to enhance the model’s robustness against unobserved confounding.

Specifically, our contributions are summarized as follows:
\begin{itemize}
    \item We propose CHAUN, a generalized uplift modeling framework that adaptively mediates cross-head interaction through attention gating within a dual-head architecture.
    \item We demonstrate that under unobserved confounding, access to the true propensity scores is sufficient to ensure both the identifiability of ITE and the unbiasedness of IPS estimator.
    \item We propose RA-IPS, a robust inverse propensity score method that addresses unobserved confounders without requiring true propensity scores, leveraging adversarial weighting to enhance stability under unobserved confounding.
    \item We conduct extensive experiments on two public datasets and a production dataset to validate the effectiveness of the proposed CHAUN and RA-IPS methods.
\end{itemize}

\section{Related Work}
In this section, we review existing approaches that leverage inter-group similarity inherent in uplift modeling frameworks, address selection bias, mitigate the impact of unobserved confounders.

\subsection{Exploiting Inter-group Similarity}
Empirical studies~\cite{Xlearner,FlexTenet} suggest that potential outcomes under treatment and control conditions share similar functional structures or model parameters, the ITE as their difference inherently exhibits relatively lower complexity.
The S-Learner~\cite{Xlearner} and frameworks like EFIN~\cite{efin}, ECUP~\cite{ECUP} that encode treatment indicators as model covariates, naturally exploit this prior. 
EUEN~\cite{EUEN} explicitly models the control-group outcome and ITE, then constructs the treatment-group outcome through additive integration of them.
FlexTENet~\cite{FlexTenet} implements partial neuron sharing between layers of the treatment head and control head within its dual-head multi-task architecture. Unlike these methods, CHAUN employs cross-head attention over treatment/control representations to exploit inter-group similarity.

\subsection{Addressing Selection Bias}
In statistical research, practitioners typically employ propensity scores to perform matching~\cite{rosenbaum1983} or inverse probability weighting~\cite{whatif}, thereby constructing pseudo-RCT samples that approximate randomized experimental conditions. In neural network-based approaches, a multitude of methods leverage architectural designs and regularization constraints to mitigate selection bias.
BNN~\cite{BNN} learns a shared feature representation network for both treatment and control groups, while employing Integral Probability Metrics loss to constrain the distributional discrepancy between group representations. 
CFRNet~\cite{cfrnet} retains the shared feature representation, while employing a multi-head architecture with treatment-specific heads to learn group-specific outcome predictions.
Building upon CFRNet, DR-CFR~\cite{dr-cfr} introduces confounder disentanglement and designs propensity score-based weights that account for both factual and counterfactual occurrence probabilities.
DESCN~\cite{descn} and GNUM~\cite{GNUM} enable unified parameter learning across all samples by constructing shared labels for both treatment and control groups. We adopt a weighting approach but introduce a global regularization constraint on the propensity scores to stabilize the weights.

\subsection{Handling Unobserved Confounders}
When a small amount of RCT data is accessible, it can be utilized to correct the bias in large scale observational data~\cite{usurvey,urct1,urct2}.
When RCT data are not possible, causal identification necessitates incorporating structural assumptions or auxiliary information. Common strategies include employing instrumental variables to constrain treatment assignment mechanisms~\cite{deepIV,IV1,IV2}, or identifying proxy variables that correlate with unobserved confounders to enable partial recovery of their latent distributional properties~\cite{proxy,Miao_proxy}.
In the absence of supplementary information, methodologies leveraging sensitivity analyses~\cite{AT-IPS,RD-IPS} enhance robustness against unobserved confounders through adversarial learning.
Our method is based on propensity score sensitivity analysis, but we rigorously derive a more reasonable range for propensity score perturbations.


\section{Preliminaries}
In this section, we extend the Neyman-Rubin potential outcomes framework~\cite{Rubin2005} to incorporate unobserved confounders, to formalize our uplift modeling problem. We then propose solutions under the requirement of leveraging true propensity scores without direct observation of unobserved confounders.

\subsection{Problem Definition}
Assuming we have a dataset $\mathcal{D}$ consisting of $N$ samples $(x_i,u_i,t_i,y_i)$ where $x_i$ denotes observed features, $u_i$ denotes unobserved confounders, $t_i\in \{0,1\}$ denotes the binary treatment, and $y_i$ denotes the outcome response. Each instance of $\mathcal{D}$ is independently sampled from the joint distribution $p(x,u,t,y(0),y(1))$.
Let $\pi(x,u)=P(t=1|x,u)$ denote the true propensity score, which governs the treatment assignment. The unobservability of $u$ renders the true propensity score inaccessible. In practice, we can obtain or estimate the nominal propensity scores $\tilde{\pi}(x)=P(t=1|x)$ based on observed features. The ITE to be estimated is expressed as:
\begingroup
\setlength{\abovedisplayskip}{2pt}
\setlength{\belowdisplayskip}{2pt}
\begin{equation}
    ITE(x) = \mathbb{E}(y(1)-y(0)|x).
    \label{ITEdefinition}
\end{equation}
\endgroup
Recent works~\cite{descn,efin,UMLC} in uplift modeling under the Neyman-Rubin potential outcomes framework~\cite{Rubin2005} universally incorporate three foundational assumptions: 
\begin{enumerate}[label=\arabic*)]
    \item Ignorability: treatment assignment is conditionally independent of potential outcomes given observed covariates, i.e., $y(1),y(0) \perp\!\!\!\perp t|x$. 
    \item Consistency: $y_i$ corresponds to the potential outcome under the administered treatment, i.e., $y_i=t_iy_i(1)+(1-t_i)y_i(0)$.
    \item Overlap: each unit has a non-zero probability to be assigned to each treatment, i.e., $0 < \tilde{\pi}(x)<1 ,\forall x$.
\end{enumerate}
The ignorability assumption entails the absence of unobserved confounders. In the subsequent analysis, we maintain the Consistency and Overlap assumptions throughout, while systematically examining scenarios with and without the presence of unobserved confounders. Notably, when unobserved confounders exist, the overlap assumption generalizes to require $0 < \pi(x,u)<1 ,\forall x,u$.

\subsection{Identifying the ITE}

The ignorability assumption ensures the identifiability of ITE by 
\begin{equation}
    \begin{split}
    ITE(x)& = \mathbb{E}(y(1)-y(0)|x)=\mathbb{E}(y(1)|x)-\mathbb{E}(y(0)|x)
    \\ & = \mathbb{E}(y(1)|x,t=1)-\mathbb{E}(y(0)|x,t=0)
    \\ & = \mathbb{E}(y|x,t=1)-\mathbb{E}(y|x,t=0),
\end{split}
\label{est ite}
\end{equation}
where $\mathbb{E}(y|x,t=1)$ and $\mathbb{E}(y|x,t=0)$ can be estimated from observed data. In \cref{est ite}, the penultimate equality is derived from the Ignorability assumption, and the final equality follows from the Consistency assumption. Note that when the ignorability assumption fails, $\mathbb{E}(y(k)|x)\ne \mathbb{E}(y(k)|x,t=1)$ since $p(y(k)|x)$ and $p(y(k)|x,t=k)$ are not equivalent in distribution, $k=0,1$. Therefore, under such circumstances, we require additional information about the distribution of $u$ to resolve the identifiability of ITE. Diverging from existing approaches that introduce instrumental variables~\cite{deepIV,IV1,IV2} or proxies~\cite{proxy,Miao_proxy}, we draw inspiration from the sufficiency of propensity scores in \cite{dragonnet}, demonstrating that precise knowledge of the true propensity scores $\pi(x,u)$ suffices to guarantee ITE identifiability.
\begin{theorem}
Under the assumption of the presence of unobserved confounders u with known true propensity scores $\pi(x,u)$,
the ITE can be identified through two approaches:
\begin{enumerate}[label=\arabic*),leftmargin=*,  
    labelwidth=1.5em,  
    align=left,         
    itemindent=0pt,     
    labelsep=0.5em]
\item 
\begin{equation}
    ITE(x)=\mathbb{E}\left( \frac{ty}{\pi(x,u)}- \frac{(1-t)y}{1-\pi(x,u)}\bigg| x \right)
\end{equation}

\item If the nominal propensity score $\tilde{\pi}(x)$ is known:
\begin{equation}
    ITE(x)=\mathbb{E}\left(\frac{\tilde{\pi}(x)}{\pi(x,u)}y\bigg|x,t=1\right)-\mathbb{E}\left(\frac{1-\tilde{\pi}(x)}{1-\pi(x,u)}y\bigg|x,t=0\right).
\end{equation}
\end{enumerate}
\label{ITEwithpi}
\end{theorem}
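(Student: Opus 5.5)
The natural assumption to work under is latent ignorability: $y(1),y(0) \perp\!\!\!\perp t \mid x,u$. This is implicit in calling $u$ the (only) unobserved confounder. Together with Consistency and the generalized Overlap $0<\pi(x,u)<1$, it drives the whole argument. The plan is a tower-property computation: condition first on $(x,u)$, use latent ignorability to factor out $t$, and then integrate $u$ out given $x$.

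\textbf{Part 1.} By Consistency, $ty = t\,y(1)$ and $(1-t)y=(1-t)y(0)$. Fix $(x,u)$. Since $\pi(x,u)$ is $(x,u)$-measurable,
\[
\mathbb{E}\left(\frac{t\,y(1)}{\pi(x,u)}\bigg|x,u\right)=\frac{\mathbb{E}(t\mid x,u)\,\mathbb{E}(y(1)\mid x,u)}{\pi(x,u)}=\mathbb{E}(y(1)\mid x,u).
\]
The factorization of the expectation of the product uses latent ignorability, and Overlap guarantees the division is legitimate. The same step gives $\mathbb{E}\big((1-t)y(0)/(1-\pi(x,u))\mid x,u\big)=\mathbb{E}(y(0)\mid x,u)$. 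Taking $\mathbb{E}(\cdot\mid x)$ of the difference and applying the tower property yields $\mathbb{E}(y(1)-y(0)\mid x)=ITE(x)$, which is the definition \cref{ITEdefinition}.

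\textbf{Part 2.} I reduce to Part 1 by splitting on $t$. For any integrable $g$,
\[
\mathbb{E}(t\,g\mid x)=P(t=1\mid x)\,\mathbb{E}(g\mid x,t=1)=\tilde{\pi}(x)\,\mathbb{E}(g\mid x,t=1),
\]
and similarly $\mathbb{E}((1-t)g\mid x)=(1-\tilde{\pi}(x))\,\mathbb{E}(g\mid x,t=0)$. Apply this with $g=y/\pi(x,u)$ for the first term and $g=y/(1-\pi(x,u))$ for the second. Since $\tilde{\pi}(x)$ is constant given $x$, it can be moved inside the conditional expectations, which gives exactly the stated formula. Overlap for $\tilde{\pi}$ ensures the conditioning events have positive probability.

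\textbf{Main obstacle.} The only delicate point is Part 1's use of conditional independence. The paper's displayed assumptions list ignorability only given $x$, and that assumption is explicitly dropped here. I would therefore state latent ignorability given $(x,u)$ as the operative assumption; this is the standard meaning of "$u$ are the unobserved confounders". I would also note the mild integrability conditions needed to justify the tower property. Everything else is routine conditioning.
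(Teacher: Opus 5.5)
Your proof is correct. Part 1 is the paper's argument, with the chain of equalities run in the other direction. The paper starts from $\mathbb{E}(y(1)\mid x)=\int \mathbb{E}(y(1)\mid x,u)\,p(u\mid x)\,du$ and inserts $t/\pi(x,u)$ using $\int t\,p(u,t\mid x)\,dt=\pi(x,u)\,p(u\mid x)$. That is the same as your conditioning on $(x,u)$ followed by the tower property.

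Part 2 is where you take a different route. The paper does not reduce to Part 1. It goes back to the same integral and applies Bayes' rule $p(u\mid x)=\frac{\tilde{\pi}(x)}{\pi(x,u)}\,p(u\mid x,t=1)$, together with $\mathbb{E}(y(1)\mid x,u)=\mathbb{E}(y\mid x,u,t=1)$. Your identity $\mathbb{E}(t\,g\mid x)=\tilde{\pi}(x)\,\mathbb{E}(g\mid x,t=1)$ is the same change of measure, stated for expectations rather than densities. Your version is shorter, makes Part 2 a direct corollary of Part 1, and needs no density for $u$. The paper's version is self-contained and shows explicitly how the weight $\tilde{\pi}(x)/\pi(x,u)$ converts $p(u\mid x,t=1)$ back into $p(u\mid x)$.

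Your ``main obstacle'' remark is also accurate. The paper's proof uses latent ignorability $y(1),y(0)\perp\!\!\!\perp t\mid x,u$ without stating it, in two places:
\begin{enumerate}[label=\arabic*)]
\item In Part 1, to identify $\int t\,\mathbb{E}(y(1)\mid x,u)\,p(u,t\mid x)/\pi(x,u)\,du\,dt$ with $\mathbb{E}(t\,y(1)/\pi(x,u)\mid x)$.
\item In Part 2, to replace $\mathbb{E}(y(1)\mid x,u)$ by $\mathbb{E}(y\mid x,u,t=1)$.
\end{enumerate}
The theorem as stated does not list this assumption, so making it an explicit hypothesis is the right call.
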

\cref{ITEwithpi} provides a statistical perspective demonstrating that for identifying the ITE in the presence of unobserved confounders $u$, it suffices to understand the joint influence mechanism of $u$ with observed covariates $x$ on treatment assignment, rather than necessitating complete distributional characterization of $u$. Notably, a similar identification principle can be extended to the estimation of the Average Treatment Effect (ATE). Crucially, when true propensity scores $\pi(x,u)$ are accessible, inverse weighting through them enables the construction of a pseudo-population that effectively blocks backdoor paths from unobserved confounders to the treatment. We next explain how to integrate this concept with weighting methods commonly used in machine learning.

\subsection{Inverse Propensity Score}
\label{uips}
 We now analyze selection bias in uplift modeling through a machine learning lens, focusing on how covariate shift between treatment groups impacts model generalization. Let $\tau(x),\mu_0(x),\mu_1(x)$ denote the $ITE(x),\mathbb{E}(y|x,t=0)$ and $\mathbb{E}(y|x,t=1)$ fitted by the model respectively, $T=\{i:t_i=1\},C=\{i:t_i=0\}$ are treatment group and control group. Selection bias in uplift modeling refers to the phenomenon where the model-learned $\mu_0(x),\mu_1(x)$ are trained exclusively on the treatment and control group samples respectively, leading to poor generalization to the overall population distribution. Analogous issues are prevalent in recommendation systems~\cite{rec_as_trt,unbiasrec} and CTR prediction~\cite{ESMM,DCMT}.
Formally, the bias arises from the non-equivalence between the following two loss functions:
\begin{align}
    &\mathcal{L}_{ideal} = \frac1N\sum_{i=1}^N (l(\mu_0(x_i),y_i(0))+   l(\mu_1(x_i),y_i(1))),\\
    &\mathcal{L}_{fact} = \frac2N(\sum_{i\in C} l(\mu_0(x_i),y_i(0))+  \sum_{i\in T} l(\mu_1(x_i),y_i(1))),
\end{align}
where $l(\cdot,\cdot)$ denotes the per-sample loss function, typically specified as mean squared error or cross-entropy. While our objective is to optimize the ideal loss $\mathcal{L}_{ideal}$, the fundamental unobservability of counterfactual outcomes necessitates optimizing the factual loss $\mathcal{L}_{fact}$ in practice, where $\mathbb{E}_t(\mathcal{L}_{fact})\ne \mathcal{L}_{ideal} $ holds in general due to selection bias~\cite{rec_as_trt}. To tackle this issue, the established methodology applies IPS~\cite{Imbens_Rubin_2015,samplingT} as a reweighting mechanism for the loss function, where propensity weights compensate for selection bias through importance sampling:
\[
\mathcal{L}_{IPS}=\frac1N\sum_{i=1}^N \bigg(\frac{(1-t_i)l(\mu_0(x_i),y_i(0))}{1-\tilde{\pi}(x_i)}+  \frac{t_il(\mu_1(x_i),y_i(1))}{\tilde{\pi}(x_i)}\bigg).
\]
 The statistical validity of the IPS method originates from its construction of an unbiased estimator for $\mathcal{L}_{ideal}$ under the ignorability assumption, i.e.,
\begin{equation}
\label{unbias}
    \mathbb{E}_t(\mathcal{L}_{IPS}) = \mathcal{L}_{ideal}.
\end{equation}
 However, in the presence of unobserved confounders, \cref{unbias} below no longer holds universally due to the discrepancy between nominal and true propensity scores, i.e., $\pi(x_i,u_i)\ne\tilde{\pi}(x_i)$. Unbiasedness can be preserved if and only if the weighting scheme employs the true propensity score~\cite{RD-IPS}. 
\begin{theorem}
\label{unbias}
Under the assumption of the presence of unobserved confounders u, let $\mathcal{L}_{true-IPS}$ denote the IPS estimator using the true propensity score $\pi(x_i,u_i)$, formally expressed as: 
\[
\mathcal{L}_{true-IPS}=\frac1N\sum_{i=1}^N \bigg(\frac{(1-t_i)l(\mu_0(x_i),y_i(0))}{1-\pi(x_i,u_i)}+  \frac{t_il(\mu_1(x_i),y_i(1))}{\pi(x_i,u_i)}\bigg).
\]
Then it is an unbiased estimator of $\mathcal{L}_{ideal}$. 
\end{theorem}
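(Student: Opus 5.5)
The plan is to compute the expectation of $\mathcal{L}_{true-IPS}$ over the treatment assignment, conditioning on everything except $t$, in the same sense as $\mathbb{E}_t(\mathcal{L}_{IPS})$ in \cref{unbias}. Concretely, I will condition on $(x_i,u_i,y_i(0),y_i(1))$ for every $i$ and treat $\mathcal{L}_{ideal}$ as fixed given these quantities. Then I will show that each summand of $\mathcal{L}_{true-IPS}$ has conditional expectation $l(\mu_0(x_i),y_i(0))+l(\mu_1(x_i),y_i(1))$. Linearity of expectation and averaging over $i$ then give $\mathbb{E}_t(\mathcal{L}_{true-IPS})=\mathcal{L}_{ideal}$.

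The key step is identifying the conditional law of $t_i$. By definition of the true propensity score, $P(t_i=1\mid x_i,u_i)=\pi(x_i,u_i)$. What we actually need is the stronger statement
\[
P\bigl(t_i=1\mid x_i,u_i,y_i(0),y_i(1)\bigr)=\pi(x_i,u_i),
\]
that is, latent ignorability $y(1),y(0)\perp\!\!\!\perp t\mid x,u$. This is the implicit content of saying that $u$ collects the unobserved confounders and that $\pi(x,u)$ "governs the treatment assignment". I will state it explicitly as the standing assumption in the unobserved-confounding setting; it is the same fact used to prove \cref{ITEwithpi}.

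Given this, the computation is direct. The losses $l(\mu_k(x_i),y_i(k))$ are measurable with respect to the conditioning variables, since $\mu_k$ is fixed and depends only on $x_i$. Therefore
\[
\mathbb{E}\!\left[\tfrac{t_i}{\pi(x_i,u_i)}\,l(\mu_1(x_i),y_i(1))\,\Big|\,x_i,u_i,y_i(\cdot)\right]=\tfrac{\pi(x_i,u_i)}{\pi(x_i,u_i)}\,l(\mu_1(x_i),y_i(1)),
\]
and analogously $\mathbb{E}[1-t_i\mid\cdot]=1-\pi(x_i,u_i)$ cancels the control weight. The generalized overlap assumption $0<\pi(x,u)<1$ guarantees that the weights are finite and the cancellations are legitimate. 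Independence across samples is not even needed, only linearity.

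The main obstacle is conceptual rather than computational: one must be precise about what $\mathbb{E}_t$ conditions on. If one conditioned only on $x_i$, then $t_i$ would be correlated with $y_i(k)$ through $u_i$ and the cancellation would fail. That failure is exactly why the nominal $\tilde{\pi}$ gives a biased estimator. So I will make the conditioning on $u_i$ (together with latent ignorability) explicit.

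Optionally, I will remark that taking a further expectation over $(x,u,y(\cdot))$ yields unbiasedness for the population risk $\mathbb{E}[\mathcal{L}_{ideal}]$ by the tower property.
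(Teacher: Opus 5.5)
Your proposal is correct and follows essentially the same route as the paper: take the expectation of each summand over $t_i$, let the factors $\pi(x_i,u_i)$ and $1-\pi(x_i,u_i)$ cancel the weights, and sum by linearity to recover $\mathcal{L}_{ideal}$. The latent-ignorability assumption $y(1),y(0)\perp\!\!\!\perp t\mid x,u$ that you state explicitly is what the paper uses implicitly when it writes $\mathbb{E}_{t_i}$ of the summand as a $\pi(x_i,u_i)$-weighted average, and your remark that only linearity is needed is fair, with one caveat: the i.i.d.\ sampling of units is still what lets you reduce conditioning on all units' $(x_j,u_j,y_j(\cdot))$ to unit $i$'s own.
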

 
\section{Methodology}
In this section, we elaborate on the proposed method consisting of two distinct methodological components: (1) CHAUN, a general-purpose uplift network architecture for binary treatment settings, and (2) RA-IPS, a novel approach that enhances the robustness of IPS estimation under unobserved confounding. CHAUN predicts nominal propensity scores and uses them for weighting; this approach is fully valid in the absence of unobserved confounders, but must be combined with RA-IPS when such confounders are present.

\subsection{CHAUN}
The architecture of the proposed CHAUN is illustrated in \cref{archi}. The architecture primarily comprises three components: a Shared Feature Embedding Layer, a Propensity Learner, and an Outcome Learner. In the subsequent subsections, we systematically elaborate on the design principles and implementation details of each module.

\begin{figure*}[ht]
  \centering
  \includegraphics[
    width=0.95\textwidth,      
    trim={5pt 172pt 3pt 112pt}, 
    clip                      
  ]{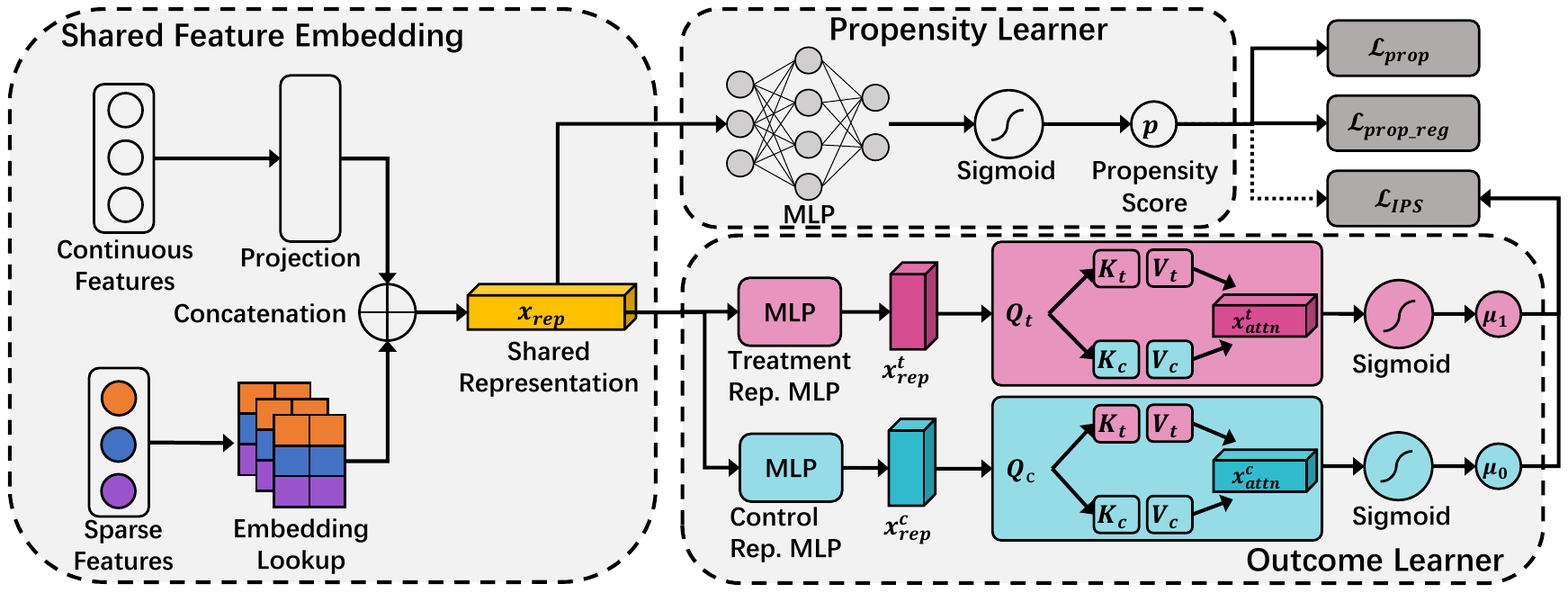}   
  \vspace{-5pt}
  \caption{The architecture overview of CHAUN. }
  \label{archi}
  \vspace{-5pt}
\end{figure*}

\subsubsection{Shared Feature Embedding}
For an instance with observed non-treatment features $x_i$, assumed to consist of both continuous and sparse features, we process them separately. For continuous features, we apply a dimension-preserving projection with learnable affine transformations, while for sparse features, we initialize dedicated embedding tables and retrieve the corresponding embeddings via lookup operations based on their categorical values. The processed feature embeddings are concatenated into a unified vector $x_{rep}$, serving as the shared input to both the Propensity Learner and Outcome Learner.


\subsubsection{Propensity Learner}
Upon obtaining the instance representation $x_{rep}$, the Propensity Learner processes it through a multi-layer perceptron (MLP) architecture comprising stacked fully-connected layers interleaved with activation functions. The network culminates in a linear projection layer that reduces dimensionality to 1, followed by a sigmoid activation function to yield the final propensity score prediction. Let the MLP contain $n$ hidden layers, the Propensity Learner is formulated as:
\begin{align*}
x_{hidden}^{(k)}&=\text{ReLU}(\text{Linear}(x_{hidden}^{(k-1)}))\in\mathbb{R}^d, k=1,\cdots,n,\\
\hat{p}&=\text{Sigmoid}(\text{Linear}(x_{hidden}^{(n)})) \in (0,1),
\end{align*}
where $x_{hidden}^{(k)}$ denotes the output of the $k$-th layer, with the initial hidden state defined as $x_{hidden}^{(0)}=x_{rep}$.

\subsubsection{Outcome Learner} 
In contrast to conventional dual-head architectures that either independently predict potential outcomes of treatment and control groups from shared feature representations or impose regularization constraints to differentiate the heads, we propose an adaptive interaction mechanism that establishes dynamic coupling between the dual learners through attention-based gating. The Outcome Learner initially processes $x_{rep}$ through two MLPs to derive potential outcome representations under both treated and control conditions, formally expressed as:
\begin{align*}
    x_{rep}^c=\text{MLP}(x_{rep})\in \mathbb{R}^d,\\
    x_{rep}^t=\text{MLP}(x_{rep})\in \mathbb{R}^d,
\end{align*}
where the $\text{MLP}$ is composed of multiple fully-connected layers interleaved with nonlinear activation functions. Rather than directly utilizing the two representations to generate corresponding outputs, we implement an attention-gated fusion mechanism to produce the final prediction through adaptive weighted combination. The fused representation is then processed by a lightweight output layer to generate the final prediction. For binary response variables, this process can be formally expressed as: 
\begin{align*}
    x_{attn}^c = \frac{\exp(q^ck^c)v^c+\exp(q^ck^t)v^t}{\exp(q^ck^c)+\exp(q^ck^t)}\in \mathbb{R}^d,\\
    x_{attn}^t = \frac{\exp(q^tk^c)v^c+\exp(q^tk^t)v^t}{\exp(q^tk^c)+\exp(q^tk^t)}\in \mathbb{R}^d,\\
    \mu_0 = \text{Sigmoid}(\text{Linear}(x_{attn}^c)) \in (0,1),\\
    \mu_1 = \text{Sigmoid}(\text{Linear}(x_{attn}^t)) \in (0,1),
\end{align*}
where $q^c,k^c,v^c$ and $q^t,k^t,v^t$ are derived from $x_{rep}^c$ and $x_{rep}^t$ via learnable linear projections respectively.

\subsubsection{Loss} 
To address sample selection bias, we incorporate IPS into the training loss computation, which requires accurate estimation of propensity scores. However, naively learning these scores through binary classification of treatment assignment labels may yield pathological solutions in which $\hat{p}(x_i)=t_i$, a seemingly perfect but overlap-violating estimator. For stabilizing the inverse probability weights, a global regularization constraint is imposed on the propensity score estimation, grounded in the theoretical framework established by the following proposition. 
\begin{proposition}
\label{IPSres}
    Assuming the covariates x and treatment assignment t are governed by the joint distribution p(x, t) (whether the Ignorability assumption holds or not) with the overlap condition satisfied, the following holds: 
    \begin{align}
       \mathbb{E}\left(\frac{t}{\tilde{\pi}(x)}\right)= \mathbb{E}\left(\frac{1-t}{1-\tilde{\pi}(x)}\right)=1. 
    \end{align}
\end{proposition}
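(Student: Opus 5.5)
The plan is to condition on $x$ and apply the tower property. The definition $\tilde{\pi}(x)=P(t=1\mid x)$ from the problem setup is all we use; no assumption about how $t$ relates to the potential outcomes enters. This is why the statement holds whether or not Ignorability is satisfied, even when unobserved confounders $u$ drive the true assignment.

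\textbf{Step 1: the treated term.} Write
\[
\mathbb{E}\left(\frac{t}{\tilde{\pi}(x)}\right)=\mathbb{E}\left(\mathbb{E}\left(\frac{t}{\tilde{\pi}(x)}\bigg| x\right)\right)=\mathbb{E}\left(\frac{\mathbb{E}(t\mid x)}{\tilde{\pi}(x)}\right).
\]
Since $\tilde{\pi}(x)$ is a measurable function of $x$, it can be pulled out of the inner conditional expectation. The overlap condition $0<\tilde{\pi}(x)<1$ guarantees that $1/\tilde{\pi}(x)$ is finite for every $x$, so the ratio is well defined. Because $t$ is binary, $\mathbb{E}(t\mid x)=P(t=1\mid x)=\tilde{\pi}(x)$. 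The ratio therefore equals $1$ pointwise, and the outer expectation gives $1$.

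\textbf{Step 2: the control term.} The same argument applies to $\frac{1-t}{1-\tilde{\pi}(x)}$, using $\mathbb{E}(1-t\mid x)=1-\tilde{\pi}(x)>0$.

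\textbf{Main obstacle: integrability.} The only point needing care is justifying the tower property when $1/\tilde{\pi}(x)$ may be unbounded, which happens if $\tilde{\pi}$ approaches $0$ or $1$ without reaching it. I would handle this by noting that the integrands are nonnegative. The tower property for nonnegative random variables holds without any integrability assumption, and the computation itself shows the result is finite (equal to $1$). So no extra condition beyond pointwise overlap is needed.

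I would close by remarking that these identities concern only $p(x,t)$. They therefore serve as a valid global calibration constraint on the estimated propensity $\hat{p}$: the degenerate solution $\hat{p}(x_i)=t_i$ violates them.
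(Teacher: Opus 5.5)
Your proof is correct: conditioning on $x$ and using $\mathbb{E}(t\mid x)=\tilde{\pi}(x)$ (and $\mathbb{E}(1-t\mid x)=1-\tilde{\pi}(x)$) is all that is needed, and your nonnegativity remark covers the case where $\tilde{\pi}$ is not bounded away from $0$ or $1$. The paper states this proposition without proof. Your argument is the same per-unit calculation the paper uses to show the true-propensity IPS estimator is unbiased (there with $\pi(x_i,u_i)$ and $\mathbb{E}_{t_i}$), and conditioning on $(x,u)$ instead of $x$ gives the companion identity for $\pi(x,u)$ word for word.
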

Building upon \cref{IPSres}, the supervised loss function for propensity score estimation comprises two key components:
\begin{gather*}
    \mathcal{L}_{prop}=\frac{1}{N}\sum_{i=1}^{N}l(\hat{p}(x_i),t_i),\\
    \mathcal{L}_{prop\_reg}=\left(\frac{1}{N}\sum_{i=1}^{N}\frac{t_i}{\hat{p}(x_i)}-1\right)^2+\left(\frac{1}{N}\sum_{i=1}^{N}\frac{1-t_i}{1-\hat{p}(x_i)}-1\right)^2, 
\end{gather*}
where $l(\cdot,\cdot)$ denotes the cross-entropy loss function.
Computed propensity scores are gradient-detached and used to reweight the potential outcome prediction loss:
\begin{equation*}
    \mathcal{L}_{IPS}=\frac{1}{N}\sum_{i=1}^{N}\left(\frac{t_il(\mu_1(x_i),y_i)}{\hat{p}(x_i)}+\frac{(1-t_i)l(\mu_0(x_i),y_i)}{1-\hat{p}(x_i)}\right).
\end{equation*}
The final optimization objective function is expressed as:
\begin{equation*}
\mathcal{L}=\mathcal{L}_{IPS}+\alpha \mathcal{L}_{prop}+\beta \mathcal{L}_{prop\_reg},
\end{equation*}
where $\alpha$ and $\beta$ are hyperparameters.

\subsection{RA-IPS}
\label{RA-IPS}
As demonstrated in \cref{uips}, the IPS estimator based on nominal propensity scores $\tilde{\pi}(x)$ fails to achieve effective bias correction under the presence of unobserved confounders. In contrast, with access to true propensity scores $\pi(x,u)$, debiasing can be achieved even without accurate information about the unobserved confounders. However, true propensity scores are generally unobservable in practice. To address this fundamental limitation, we develop a robust IPS estimation framework that incorporates sensitivity analysis principles, specifically designed to improve robustness to unobserved confounding via adversarial propensity.

The nominal propensity score 
$\tilde{\pi}(x)$ can be formally characterized as the conditional expectation of the true propensity score $\pi(x,u)$ given the observed covariates $x$,
\[
\label{true prop}
\tilde{\pi}(x)= \int \pi(x,u)p(u|x)du.
\]
While nominal propensity scores cannot identify the true latent propensity scores, they constrain the feasible domain of the true propensity score.
\begin{proposition}
\label{true prop limit}
Suppose that $a(x)<\pi(x,u)<1-a(x)$, then the following inequalities each hold with probability at most $\eta$, 
\begin{align*}
    |\frac{1}{\pi(x,u)}-\frac{1}{\tilde{\pi}(x)}|&\ge \frac{1-2a(x)}{2a(x)\tilde{\pi}(x)\sqrt\eta}, \\
|\frac{1}{1-\pi(x,u)}-\frac{1}{1-\tilde{\pi}(x)}|&\ge \frac{1-2a(x)}{2(1-a(x))\tilde{\pi}(x)\sqrt\eta}. 
\end{align*}
\end{proposition}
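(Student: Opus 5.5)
The plan is to condition on $x$ throughout and treat $\pi(x,u)$, with $u\sim p(u\mid x)$, as a random variable whose conditional mean is $\tilde{\pi}(x)$ by the displayed representation $\tilde{\pi}(x)=\int \pi(x,u)p(u\mid x)\,du$. The central probabilistic fact is a concentration bound for $\pi(x,u)$ around $\tilde{\pi}(x)$, and each inequality will be reduced to it. Since $\pi(x,u)$ lies in an interval of length $1-2a(x)$, Popoviciu's inequality gives $\operatorname{Var}(\pi(x,u)\mid x)\le (1-2a(x))^2/4$. Chebyshev's inequality then yields
\[
P\!\left(|\pi(x,u)-\tilde{\pi}(x)|\ge \frac{1-2a(x)}{2\sqrt{\eta}}\;\middle|\;x\right)\le \eta .
\]

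For the first inequality I write $\bigl|\frac{1}{\pi}-\frac{1}{\tilde{\pi}}\bigr|=\frac{|\pi-\tilde{\pi}|}{\pi\tilde{\pi}}$. Because $\pi>a$, this is at most $\frac{|\pi-\tilde{\pi}|}{a\tilde{\pi}}$. Hence the event $\bigl|\frac{1}{\pi}-\frac{1}{\tilde{\pi}}\bigr|\ge\frac{1-2a}{2a\tilde{\pi}\sqrt{\eta}}$ forces $|\pi-\tilde{\pi}|\ge\frac{1-2a}{2\sqrt{\eta}}$. By the Chebyshev bound above, that event has conditional probability at most $\eta$.

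For the second inequality I would proceed the same way: $\bigl|\frac{1}{1-\pi}-\frac{1}{1-\tilde{\pi}}\bigr|=\frac{|\pi-\tilde{\pi}|}{(1-\pi)(1-\tilde{\pi})}$. To reduce to the same Chebyshev event with the stated threshold $\frac{1-2a}{2(1-a)\tilde{\pi}\sqrt{\eta}}$, I would need the lower bound $(1-\pi)(1-\tilde{\pi})\ge(1-a)\tilde{\pi}$ on the denominator.

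\textbf{This step is the main obstacle, and I expect it to fail.} The bounds actually available are only $1-\pi>a$ and $1-\tilde{\pi}>a$. These give the threshold $\frac{1-2a}{2a(1-\tilde{\pi})\sqrt{\eta}}$ (or $\frac{1-2a}{2a^2\sqrt{\eta}}$), not the printed one. Before committing, I would test the printed version on a two-point distribution.

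Take $a=0.1$ and $\tilde{\pi}=0.8$, and let $\pi$ take the value $0.89$ with probability $p=0.885$ and the value $0.11$ otherwise, which gives mean $0.8$. On the atom $\pi=0.89$ the left-hand side equals $0.09/(0.11\cdot0.2)\approx4.09$. The stated threshold with $\eta=0.1$ is $0.8/(2\cdot0.9\cdot0.8\cdot\sqrt{0.1})\approx1.76$. So the event occurs with probability $0.885>\eta$.

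My plan is therefore as follows. I prove the first inequality as above. For the second, I present this counterexample, which shows the threshold as printed cannot be established in general. I then prove the version that the Chebyshev route does deliver, with threshold $\frac{1-2a}{2a(1-\tilde{\pi})\sqrt{\eta}}$ obtained by the symmetric argument with $\pi\mapsto1-\pi$. Unless a hidden extra assumption (e.g.\ $\tilde{\pi}\le a/(1-a)$ in a regime where the denominators compare favorably) is intended, this is the only form that can be proved.
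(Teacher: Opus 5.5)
Your proof of the first inequality is essentially the paper's argument. The paper bounds $\mathrm{Var}(\pi(x,u)\mid x)$ by $(\tfrac12-a(x))^2$ by averaging $(\pi-a)(\pi-1+a)\le 0$ and maximizing over $\tilde\pi$, which amounts to Popoviciu's bound; it then uses $\pi\tilde\pi>a\tilde\pi$ and Chebyshev just as you do.

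For the second inequality the paper only says ``analogously.'' That argument delivers exactly your threshold $\frac{1-2a(x)}{2a(x)(1-\tilde\pi(x))\sqrt\eta}$. This is strictly larger than the printed one, since $(1-a)\tilde\pi-a(1-\tilde\pi)=\tilde\pi-a>0$. For the same reason, no side condition on $\tilde\pi$ (your $\tilde\pi\le a/(1-a)$ included) makes $(1-\pi)(1-\tilde\pi)\ge(1-a)\tilde\pi$ follow from $\pi<1-a$, so you can drop that hedge. Your two-point counterexample is valid, and stronger than you state: the other atom $\pi=0.11$ also gives $|1/0.89-5|\approx 3.88>1.76$, so the event has probability $1$. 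The printed second bound is therefore an error in the statement, and your corrected version is the right one to prove.
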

Under the assurance of \cref{true prop limit}, it is statistically valid to posit that the true propensity score resides within a neighborhood of the nominal propensity score with high probability. Therefore, a reasonable assumption is that the true propensity score is obtained by perturbing the nominal propensity score. We adhere to the theoretical framework of RD-IPS~\cite{RD-IPS} by introducing a hyperparameter to regulate the impact of unobserved confounders on the logit-transformed propensity scores. Formally, the true propensity score can be mathematically expressed as follows:
\begin{align*}
    \tilde{\pi}(x)&=\text{Sigmoid}(f(x)),\\
    \pi(x,u)&=\text{Sigmoid}(f(x)+g(u)).
\end{align*}
 Assuming $g(u)$ is bounded by $|g(u)|<\log(\Gamma)$, where $\Gamma\ge1$ is a hyperparameter corresponding to the strength of unobserved confounders. This leads to the constraints on the true weights:
 \[
a(x,t) \le w(x,u,t)\le b(x,t),
 \]
where $a(x,t)=1+(\tilde{w}(x,t)-1)\frac{1}{\Gamma},b(x,t)=1+(\tilde{w}(x,t)-1)\Gamma$, and $\tilde{w}(x,t)=\frac{t}{\tilde{\pi}(x)}+\frac{1-t}{1-\tilde{\pi}(x)},w(x,u,t)=\frac{t}{\pi(x,u)}+\frac{1-t}{1-\pi(x,u)} $denote the nominal weight and true weight respectively. Therefore, the RD-IPS method formulates the following uncertainty set: 
\[
\mathcal{W} = \{W\in\mathbb{R}_+^{N}: a(x_i,t_i)\le w_i\le b(x_i,t_i) \},
\]
and maximize the $\mathcal{L}_{IPS}$ objective over all possible configurations of true weights to optimize for the worst-case scenario, thereby enhancing the robustness. The optimization objective is: 
\begin{equation}
\label{rdips}
    \mathcal{L}_{RD-IPS}=\max_{W\in \mathcal{W}}\frac{1}{N}\sum_{i=1}^Nw_il(\mu_{t_i}(x_i),y_i(t_i)). 
\end{equation}

However, since the loss terms for all samples are non-negative, the maximization operation trivially leads to each weight attaining its upper bound within the uncertainty set. Such weight configurations remain structurally infeasible under real-world data distributions, a limitation arising from the absence of global constraints on weight combinations in $\mathcal{W}$.

\begin{proposition}
\label{UIPSres}
    Assuming the covariates $x$, unobserved confounders $u$ and treatment assignment $t$ are governed by the joint distribution $p(x,u,t)$ with the overlap condition satisfied, the following holds: 
    \begin{align}
       \mathbb{E}\left(\frac{t}{\pi(x,u)}\right)= \mathbb{E}\left(\frac{1-t}{1-\pi(x,u)}\right)=1. 
    \end{align}
\end{proposition}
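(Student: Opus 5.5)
The plan is to prove Proposition~\ref{UIPSres} by the tower property, conditioning on the pair $(x,u)$. This parallels the argument for Proposition~\ref{IPSres}, with $(x,u)$ in place of $x$ and the true propensity $\pi(x,u)$ in place of $\tilde{\pi}(x)$.

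First, the generalized overlap condition $0<\pi(x,u)<1$ for all $x,u$ ensures that both ratios $t/\pi(x,u)$ and $(1-t)/(1-\pi(x,u))$ are well defined and nonnegative. Hence their expectations exist in $[0,\infty]$, and iterated conditioning is legitimate by Tonelli, with no integrability issue to resolve beforehand.

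Next, write
\[
\mathbb{E}\left(\frac{t}{\pi(x,u)}\right)=\mathbb{E}\left(\mathbb{E}\left(\frac{t}{\pi(x,u)}\bigg| x,u\right)\right)=\mathbb{E}\left(\frac{\mathbb{E}(t\mid x,u)}{\pi(x,u)}\right).
\]
The second equality pulls out $1/\pi(x,u)$, which is measurable with respect to $(x,u)$. Since $t\in\{0,1\}$, we have $\mathbb{E}(t\mid x,u)=P(t=1\mid x,u)=\pi(x,u)$ by the definition of the true propensity score. The integrand therefore equals $1$ identically, and the outer expectation is $1$.

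The control term is handled symmetrically. Using $\mathbb{E}(1-t\mid x,u)=1-\pi(x,u)$, the ratio $(1-\pi(x,u))/(1-\pi(x,u))$ is identically $1$, so its expectation is also $1$.

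There is essentially no obstacle. The one point to state explicitly is that no ignorability assumption is used anywhere. The identity depends only on $\pi(x,u)$ being the true conditional probability of treatment given \emph{all} of its determinants $(x,u)$, together with overlap, which keeps the denominators away from zero. This is also why the argument does not need knowledge of $p(u\mid x)$: the unobserved $u$ is integrated out only after the ratio has already collapsed to the constant $1$.
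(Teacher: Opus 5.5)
Your proof is correct: conditioning on $(x,u)$, using $\mathbb{E}(t\mid x,u)=\pi(x,u)$, and relying on overlap to keep the denominators nonzero is all that is needed. The paper gives no separate proof of this proposition, but it uses the same one-step conditioning computation in its appendix proof that $\mathcal{L}_{true-IPS}$ is unbiased, so your route is essentially the paper's.
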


In large-scale observational datasets, a properly specified nominal propensity score and true propensity score should conform to the theoretical guarantees established in \cref{IPSres} and \cref{UIPSres}. According to the weak law of large numbers, we have
\begin{align*}
    \lim_{N \to \infty}\frac{1}{N}\sum_{i=1}^N w(x_i,u_i,t_i)&\xrightarrow{p}2, \\
    \lim_{N \to \infty}\frac{1}{N}\sum_{i=1}^N \tilde{w}(x_i,t_i)&\xrightarrow{p}2. 
\end{align*}
Thus,
\begin{align}
\label{restrict}
    &\lim_{N \to \infty}\frac{1}{N}\sum_{i=1}^N (w(x_i,u_i,t_i)-\tilde{w}(x_i,t_i))\xrightarrow{p}0. 
\end{align}
However, the solution for $W$ derived from \cref{rdips} would result in
\begin{align*}
    \frac{1}{N}\sum_{i=1}^N (w(x_i,u_i,t_i)-\tilde{w}(x_i,t_i))=(\Gamma-1)\left(\frac{1}{N}\sum_{i=1}^N \tilde{w}(x_i,t_i)-1\right), 
\end{align*}
which fundamentally violates \cref{restrict} if $\Gamma>1$ and $N$ is large enough. 
Building upon the constraints on true weights and nominal weights established in \cref{restrict}, we propose the following rigorously justified uncertainty set:
\begin{equation}
\label{set}
    \mathcal{W}\cap  \{W\in\mathbb{R}_+^{N}: \frac{1}{N}\sum_{i=1}^N (w_i-\tilde{w}_i)\le\epsilon_N\},
\end{equation}
where $\epsilon_N>0,\epsilon_N \to 0$ as $N \to \infty$. Therefore, in our adversarial learning framework, we incorporate such constraints as a regularization term to ensure feasible weight configurations during robust optimization. The final objective function to be optimized is formulated as follows:
\begin{equation*}
    \hat{W}=\operatorname*{argmax}\limits_{W\in \mathcal{W}}\frac{1}{N}\sum_{i=1}^Nw_il(\mu_{t_i}(x_i),y_i(t_i))+\lambda(\frac{1}{N}\sum_{i=1}^N (w_i-\tilde{w}_i))^2, 
\end{equation*}
\begin{equation}
    \mathcal{L}_{RA-IPS}=\frac{1}{N}\sum_{i=1}^N\hat{w}_il(\mu_{t_i}(x_i),y_i(t_i)). 
\end{equation}

By empirically constraining the relationship between true weights and nominal weights in large-scale observational data, we establish a necessary condition for $W$ being the true weight configurations. Within a further constrained uncertainty set, we optimize for the worst-case scenario among all admissible $W$, thereby ensuring that the model's learned potential outcomes remain robust when the impact of unobserved confounders on treatment assignment is bounded within a specified range.

To analyze the generalization error, we define a unified hypothesis class $\mathcal{F}$ consisting of joint prediction functions $f_\psi: \mathcal{X} \times \{0, 1\} \to \mathbb{R}$, where each $f_\psi \in \mathcal{F}$ is given by $f(x, t) = t \cdot \mu_1(x) + (1 - t) \cdot \mu_0(x)$, and the pair $(\mu_0, \mu_1)$ is parameterized by the model architecture. The empirical Rademacher complexity is then defined as  
\[
\mathcal{R}(\mathcal{F}) = \mathbb{E}_{\sigma\sim\{-1,1\}^N} \left[ \sup_{f \in \mathcal{F}} \frac{1}{N} \sum_{i=1}^N \sigma_i \, l(f(x_i, t_i), y_i) \right],
\]  
where $\sigma_i$ are i.i.d. Rademacher random variables ($\mathbb{P}(\sigma_i = \pm 1) = \frac12$). Let $l_i,w_i$ denote the observable loss and true weight for the $i$-th sample, i.e., $l_i=l(\mu_{t_i}(x_i),y_i(t_i)), w_i=\frac{t_i}{\pi(x_i,u_i)}+\frac{1-t_i}{1-\pi(x_i,u_i)}$.

\begin{theorem}
\label{bound}
    (Generalization Bound) Suppose that the true weight configurations lie within the uncertainty set in \cref{set}, and $l_i \le C_1, w_i\le C_2, \hat{w}_i\le C_2, \forall i$. For any $f_\phi \in \mathcal{F}$, we have the following with probability at least $1-\eta$: 
\begin{equation*}
    |\mathcal{L}_{RA-IPS}(\phi) - \mathcal{L}_{ideal}(\phi)| 
    \leq 2 C_2 \mathcal{R}(\mathcal{F}) + C_1 C_2(1+ \sqrt{\frac{\log(2/\eta)}{2N}}). 
\end{equation*}
\end{theorem}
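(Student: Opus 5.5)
The plan is to split the gap through the (unobservable) true-weight IPS loss $\mathcal{L}_{true-IPS}$ of \cref{unbias}:
\[
|\mathcal{L}_{RA-IPS}-\mathcal{L}_{ideal}|\le |\mathcal{L}_{RA-IPS}-\mathcal{L}_{true-IPS}| + |\mathcal{L}_{true-IPS}-\mathbb{E}\,\mathcal{L}_{true-IPS}| + |\mathbb{E}\,\mathcal{L}_{true-IPS}-\mathcal{L}_{ideal}|.
\]
The third term vanishes. By \cref{unbias}, $\mathcal{L}_{true-IPS}$ is unbiased for $\mathcal{L}_{ideal}$, where the expectation is over the sampling of $(x,u,t,y(0),y(1))$. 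We therefore compare $\mathcal{L}_{ideal}$ with its population counterpart. Concretely, we take $\mathcal{L}_{ideal}(\phi)$ to mean the population ideal risk, so that $\mathbb{E}\,\mathcal{L}_{true-IPS}(\phi)=\mathcal{L}_{ideal}(\phi)$ exactly. Note that $\mathcal{L}_{true-IPS}=\frac1N\sum_i w_i l_i$ in the notation preceding the theorem.

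For the first (weight-mismatch) term, write
\[
\mathcal{L}_{RA-IPS}-\mathcal{L}_{true-IPS}=\frac1N\sum_i(\hat w_i-w_i)l_i .
\]
Both $\hat w_i$ and $w_i$ lie in $[0,C_2]$ and $0\le l_i\le C_1$. Hence each summand satisfies $|(\hat w_i-w_i)l_i|\le C_1C_2$, and the term is at most $C_1C_2$. This supplies the constant $C_1C_2$ in the bound. One could try to sharpen it using the uncertainty set \cref{set}: both $\hat W$ and $W$ satisfy $\frac1N\sum(w_i-\tilde w_i)\le\epsilon_N$. That constraint controls only the averages, not the loss-weighted sums, so the crude bound is what the stated inequality matches. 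The role of the hypothesis that the true weights lie in the set is mainly to guarantee $w_i\le C_2$ and that the comparison is meaningful.

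For the second (concentration) term, I would argue uniformly over $f_\phi\in\mathcal{F}$ with the standard Rademacher argument. Consider the class $\{(x,u,t,y)\mapsto w(x,u,t)\,l(f(x,t),y)\}$, whose values lie in $[0,C_1C_2]$. McDiarmid's inequality with bounded differences $C_1C_2/N$ gives, with probability at least $1-\eta$, a deviation $\sup_f|\cdot|\le \mathbb{E}\sup_f|\cdot| + C_1C_2\sqrt{\log(2/\eta)/(2N)}$. The two-sided version uses $2/\eta$. Symmetrization bounds the expected supremum by twice the Rademacher complexity of the weighted class. Since the weights are fixed nonnegative multipliers bounded by $C_2$, a contraction (Talagrand-type) step yields $2C_2\mathcal{R}(\mathcal{F})$. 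Strictly, symmetrization produces the expected Rademacher complexity, and passing to the empirical $\mathcal{R}$ costs another McDiarmid term. I will absorb that into the same $\sqrt{\log(2/\eta)/(2N)}$ term, or else read $\mathcal{R}$ as expected over samples.

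The main obstacle is the contraction step. The multipliers $w_i$ are sample-dependent, so multiplying $\sigma_i l_i$ by $w_i$ is not a Lipschitz map applied uniformly across indices. I would handle it with the vector-contraction lemma for index-dependent functions $\phi_i(s)=w_i s$, each $C_2$-Lipschitz, conditioning on $(x_i,u_i,t_i)$. Ledoux–Talagrand with per-coordinate Lipschitz maps gives $\mathbb{E}_\sigma\sup\sum\sigma_i w_i l_i\le C_2\,\mathbb{E}_\sigma\sup\sum\sigma_i l_i$. Summing the three pieces then gives $2C_2\mathcal{R}(\mathcal{F})+C_1C_2+C_1C_2\sqrt{\log(2/\eta)/(2N)}$, which is the claimed bound.
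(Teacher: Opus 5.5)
Your argument follows the paper's proof in all essentials. You split through $\mathcal{L}_{true-IPS}$, bound $\frac1N\sum_i(\hat w_i-w_i)l_i$ crudely by $C_1C_2$, and handle the remaining uniform deviation with McDiarmid, symmetrization and contraction. Your per-index contraction with the maps $s\mapsto w_i s$ is the right way to make the paper's one-line contraction step precise. Your two-sided treatment of the weight-mismatch term is also a little safer than the paper's. The paper drops the absolute value by appealing to maximality of $\hat W$ over a set containing the true weights. That appeal is shaky, because $\hat W$ maximizes the loss plus a regularizer rather than the loss alone; your bound needs neither the sign nor the membership hypothesis.

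The step you should not take is redefining $\mathcal{L}_{ideal}$ as a population risk.
\begin{itemize}
\item In the paper, $\mathcal{L}_{ideal}=\frac1N\sum_i\bigl(l(\mu_0(x_i),y_i(0))+l(\mu_1(x_i),y_i(1))\bigr)$ is an in-sample quantity. The unbiasedness of $\mathcal{L}_{true-IPS}$ is an expectation over the assignments $t$ alone, not over full resampling of $(x,u,t,y(0),y(1))$ as you state.
\item A bound against the population risk does not give the stated inequality without a further concentration step for the in-sample ideal loss. That step involves counterfactual outcomes, which $\mathcal{R}(\mathcal{F})$ does not cover.
\item The repair costs nothing. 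Condition on $\{(x_i,u_i,y_i(0),y_i(1))\}_{i=1}^N$. Then the $t_i$ are independent Bernoulli$(\pi(x_i,u_i))$, and $\mathbb{E}_t\,\mathcal{L}_{true-IPS}(f)=\mathcal{L}_{ideal}(f)$ holds exactly for every $f$.
\item Now run McDiarmid over the $t_i$, with bounded differences $C_1C_2/N$ (reading the weight bound as holding for both values of $t_i$). Follow it with the symmetrization lemma for independent, non-identically distributed summands. Your third term then vanishes for the paper's $\mathcal{L}_{ideal}$, and the rest of your argument is unchanged.
\end{itemize}

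Finally, you cannot absorb the passage from the expected Rademacher complexity (what symmetrization produces) to the empirical $\mathcal{R}(\mathcal{F})$ into the existing $\sqrt{\log(2/\eta)/(2N)}$ term. A second McDiarmid step adds a term of order $C_1C_2\sqrt{\log(1/\eta)/N}$ and splits the confidence budget. Take your other option and read $\mathcal{R}(\mathcal{F})$ as an expected complexity. That is what the paper's own step $\mathbb{E}_{\mathcal{D}}[\Phi(\mathcal{D})]\le 2C_2\mathcal{R}(\mathcal{F})$ tacitly does.
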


\section{Experiments}
In this section, we conduct extensive experiments to answer the following research
questions:
\begin{enumerate}[label={\LARGE$\bullet$}, leftmargin=*, itemsep=0pt]
    \item \textbf{RQ1:} How does our CHAUN outperform different baseline uplift models?
    \item \textbf{RQ2:} How does ablation or substitution of CHAUN's core architectural components impact model performance?
    \item \textbf{RQ3:} Does RA-IPS outperform the standard IPS estimator under unobserved confounding?
\end{enumerate}

\subsection{Experimental Setup}

\subsubsection{Datasets}
We conduct experiments on two widely used large-scale binary-treatment uplift datasets, {\bfseries CRITEO-UPLIFT}~\cite{criteo} and {\bfseries LAZADA}~\cite{descn} as well as a proprietary off-site advertising campaign dataset ({\bfseries Production}). For CRITEO-UPLIFT, we randomly split it into training and evaluation sets with an 8/2 ratio and select visit as the target label. Since treatment assignment is randomized, exposure within the treatment group exhibits selection bias. Therefore, we use exposure rather than treatment assignment as the treatment label. Subsequently, we implement propensity score matching (PSM) to construct a pseudo-RCT evaluation set.
In the Production dataset, treatment assignment is influenced by decisions made by external media platforms, making it infeasible to collect RCT data as an evaluation set. Therefore, we construct the evaluation set using propensity score matching (PSM). These external media decisions are driven by a set of features that cannot be fully obtained due to cost, risk control, permission constraints, and other practical limitations. We thus regard these features as unobserved confounders. Nevertheless, we were able to obtain these features for a subset of samples and used them in the PSM procedure to ensure that the resulting evaluation set is a fully pseudo-RCT. Key statistics of these three datasets used in our experiments are presented in \cref{datasets}.
\begin{table*}[ht]  
  \centering
  \caption{The statistics of datasets.}
  \vspace{-5pt}
  \label{datasets}
  \small
  \begin{tabularx}{0.75\textwidth}{
      >{\hsize=2.2\hsize\Centering\arraybackslash}X
      *{3}{|>{\hsize=0.8\hsize\Centering\arraybackslash}X
            >{\hsize=0.8\hsize\Centering\arraybackslash}X}
    }
    \toprule[1.2pt]
    \textbf{Dataset} &
    \multicolumn{2}{c|}{\textbf{CRITEO-UPLIFT}} &
    \multicolumn{2}{c|}{\textbf{LAZADA}} &
    \multicolumn{2}{c}{\textbf{Production}} \\
    \cmidrule(lr){2-3} \cmidrule(lr){4-5} \cmidrule(lr){6-7}
    Split & Train & Test & Train & Test & Train & Test \\
    \midrule
    Size              & 11183673 & 112788 & 926669 & 181669 & 21350813 & 11963551 \\
    Features          & 12       & 12     & 83     & 83     & 93       & 93 \\
    Treatment Ratio   & $3.07\%$      & $50.00\%$  & $22.17\%$ & $52.12\%$    & $53.78\%$      & $49.99\%$ \\
    Average Conversion Rate & $4.7\%$ & $24.12\%$    & $2.00\%$  & $3.52\%$    & $30.01\%$ & $40.60\%$ \\
    Relative Average Uplift & $1070.04\%$ & $180.02\%$ & $502.13\%$ & $11.11\%$    & $109.27\%$  & $2.86\%$ \\
    Average Uplift & $37.88\%$ & $22.85\%$ & $4.72\%$ & $0.37\%$  & $20.66\%$ & $1.15\%$ \\
    \bottomrule[1.2pt]
  \end{tabularx}
\end{table*}

\subsubsection{Metrics}
We employ three metrics commonly used in evaluating uplift modeling ranking performance, uplift score at first $h$ percentile (LIFT$@h$, we set $h$ to 30), normalized area under the uplift curve (AUUC), normalized area under the QINI curve (QINI) and normalized principled uplift curve (PUC)~\cite{ptonet} . The AUUC and QINI are calculated using functions from python package scikit-uplift.

\subsubsection{Baselines} We compare our CHAUN with many neural network based uplift models including {\bfseries S-Learner~\cite{Xlearner}}, {\bfseries T-Learner~\cite{Xlearner}}, {\bfseries TARNet~\cite{cfrnet}}, {\bfseries CFRNet~\cite{cfrnet}}, {\bfseries DragonNet~\cite{dragonnet}}, {\bfseries CEVAE~\cite{CEVAE}}, {\bfseries FlexTENet~\cite{FlexTenet}}, {\bfseries EUEN~\cite{EUEN}}, {\bfseries DESCN~\cite{descn}}, {\bfseries EFIN~\cite{efin}}. And we compare our proposed RA-IPS with {\bfseries RD-IPS~\cite{RD-IPS}}.

\subsubsection{Implementation Details} All experiments are implemented using PyTorch 2.1.0~\cite{Pytorch} and conducted on a single Tesla V100 GPU with 32GB memory. We use Adam~\cite{adam} as the optimizer and a maximum iteration count of 50. To prevent overfitting, we implement an L2 weight decay of $1e^{-4}$ along with an early stopping strategy with a patience of 10. For each input batch, continuous features are first processed through batch normalization. Using the QINI metric as the criterion, we search for the optimal primary model and training hyperparameters within the parameter ranges described in \cref{hyper}. 
The code of experiments will be provided after the paper is accepted.
\begin{table}[ht]
  \centering
  \caption{Main model and training hyperparameters and their value range.}
  \vspace{-5pt}
  \label{hyper}
  \begin{tabular}{ccc}  
    \toprule[1.2pt]    
    \textbf{Name} & \textbf{Range} & \textbf{Functionality} \\
    \midrule           
    $d$ & $\{2^6,2^7,2^8,2^9\}$ & hidden dimensions \\
    $n$ & $\{1,2,3,4\}$ & number of layers \\
    $bs$ & $\{2^8,2^9,2^{10},2^{11}\}$ & batch size\\
    $lr$ & $\{5e^{-3},1e^{-3},5e^{-4},1e^{-4}\}$ & learning rate \\
    $\alpha,\beta,\lambda$ & $\{10, 1, 1e^{-1},1e^{-2}\}$ & weight of multi-task loss \\
    \bottomrule[1.2pt] 
  \end{tabular}
\end{table}

\subsection{Overall Performance Comparison}

\begin{table*}[t] 
  \centering
  \caption{Overall performance comparison between our CHAUN and other baseline models. "Random" refers to the calculated expected metric value after random scoring and ranking. All experiments are conducted with 5 random seeds, with metric averages computed for robustness evaluation. Higher LIFT$@30$, AUUC, QINI, PUC indicate superior uplift modeling capability. The best results of each benchmark are in \cellcolor{lightblue}{\textbf{bold}} and the second best are \cellcolor{lightgreen}{\underline{underlined}}.}
  \vspace{-9pt}
  \label{performance all}
  \begin{adjustbox}{width=0.96\textwidth, keepaspectratio}
  \small 
\begin{tabularx}{\textwidth}{
    >{\hsize=1.2\hsize\centering\arraybackslash}X
    *{3}{
    | >{\hsize=0.85\hsize\centering\arraybackslash}X  
      >{\hsize=0.85\hsize\centering\arraybackslash}X  
      >{\hsize=0.85\hsize\centering\arraybackslash}X  
      >{\hsize=0.85\hsize\centering\arraybackslash}X }}
\toprule[1.2pt]
\multirow{2}{*}{\textbf{Method}} & 
\multicolumn{4}{c|}{\textbf{CRITEO-UPLIFT}} & 
\multicolumn{4}{c|}{\textbf{LAZADA}} & 
\multicolumn{4}{c}{\textbf{Production}}\\
\cmidrule(lr){2-5} \cmidrule(lr){6-9} \cmidrule(lr){10-13}
& LIFT$@30$ & AUUC & QINI & PUC & LIFT$@30$ & AUUC & QINI & PUC & LIFT$@30$ & AUUC & QINI & PUC \\
\midrule 
Random & 0.2285 & 0.0000 & 0.0000 & 0.0000 & 0.0037 & 0.0000 & 0.0000 & 0.0000 & 0.0114 & 0.0000 & 0.0000 & 0.0000 \\
S-Learner      & 0.3820 & 0.1541 & 0.1848 & 0.1585 & 0.0080 & 0.0033 & 0.0236 & 0.0060 & 0.0188 & 0.0083 & 0.0074 & 0.0073 \\
T-Learner      & 0.3844 & 0.1551 & 0.1859 & 0.1591 & 0.0078 & 0.0024 & 0.0172 & \cellcolor{lightblue}{\textbf{0.0099}} & 0.0181 & 0.0099 & 0.0086 & 0.0131 \\
TARNet         & 0.3865 & 0.1701 & 0.1890 & 0.1625 & \cellcolor{lightgreen}{\underline{0.0084}} & \cellcolor{lightgreen}{\underline{0.0042}} & \cellcolor{lightgreen}{\underline{0.0302}} & 0.0064 & 0.0234 & 0.0127 & 0.0113 & 0.0120 \\
CFRNet         & 0.3683 & 0.1463 & 0.1754 & 0.1505 & 0.0064 & 0.0025 & 0.0176 & 0.0052 & 0.0236 & 0.0131 & 0.0116 & 0.0122 \\
DragonNet      & \cellcolor{lightgreen}{\underline{0.3846}} & \cellcolor{lightgreen}{\underline{0.1595}} & \cellcolor{lightgreen}{\underline{0.1898}} & \cellcolor{lightgreen}{\underline{0.1631}} & 0.0082 & 0.0041 & 0.0250 & 0.0067 & 0.0228 & 0.0118 & 0.0106 & 0.0113 \\
CEVAE          & 0.3605 & 0.1308 & 0.1585 & 0.1392 & 0.0072 & 0.0029 & 0.0200 & 0.0032 & 0.0256 & 0.0129 & 0.0118 & 0.0124 \\
EUEN           & 0.3665 & 0.1349 & 0.1616 & 0.1395 & 0.0081 & 0.0031 & 0.0223 & 0.0030 & 0.0245 & 0.0111 & 0.0097 & 0.0101 \\
FlexTENet      & 0.3648 & 0.1418 & 0.1708 & 0.1465 & 0.0080 & 0.0036 & 0.0253 & 0.0054 & 0.0166 & 0.0133 & 0.0117 & 0.0091 \\
DESCN          & 0.3841 & 0.1516 & 0.1820 & 0.1560 & 0.0076 & 0.0028 & 0.0190 & 0.0031 & \cellcolor{lightblue}{\textbf{0.0286}} & \cellcolor{lightgreen}{\underline{0.0137}} & \cellcolor{lightgreen}{\underline{0.0122}} & \cellcolor{lightblue}{\textbf{0.0150}} \\
EFIN           & 0.3516 & 0.1372 & 0.1651 & 0.1411 & 0.0077 & 0.0032 & 0.0230 & 0.0059 & 0.0221 & 0.0109 & 0.0092 & 0.0099  \\
CHAUN          & \cellcolor{lightblue}{\textbf{0.3904}} & \cellcolor{lightblue}{\textbf{0.1620}} & \cellcolor{lightblue}{\textbf{0.1955}} & \cellcolor{lightblue}{\textbf{0.1664}} & \cellcolor{lightblue}{\textbf{0.0087}} & \cellcolor{lightblue}{\textbf{0.0044}} & \cellcolor{lightblue}{\textbf{0.0314}} & \cellcolor{lightgreen}{\underline{0.0091}} & \cellcolor{lightgreen}{\underline{0.0261}} & \cellcolor{lightblue}{\textbf{0.0144}} & \cellcolor{lightblue}{\textbf{0.0130}} & \cellcolor{lightgreen}{\underline{0.0144}} \\
\bottomrule[1.2pt]
\end{tabularx}
\end{adjustbox}
\end{table*}

We report the main experimental results of CHAUN and other baseline models on three real-world datasets in \cref{performance all}, and visualize the uplift curves in \cref{curve}. According to these results, we have the following observations:
1) AUUC and QINI generally exhibit consistent monotonicity, whereas PUC demonstrates divergent patterns of similarity under specific conditions. This characteristic enables them to evaluate uplift models from complementary dimensions.
2) On two public datasets, S-Learner and T-Learner outperform some methods with complex architectures or extra regularization.  
3) TARNet and DragonNet achieve solid performance across all datasets, demonstrating that shared feature embedding layers, incorporating propensity score prediction as an auxiliary task, and properly calibrated weighting mechanisms can reliably improve model effectiveness.
4) Our proposed CHAUN demonstrates consistently superior performance across all three evaluated datasets. Specifically, it secures top-1 rankings on 9 out of 12 evaluation metrics and top-2 positions on 3 metrics. Notably, on two public benchmark datasets, CHAUN achieves state-of-the-art results across nearly all measurement dimensions. As evidenced by Figure 3, CHAUN's uplift curve demonstrates strong monotonicity, achieving nearly the highest uplift in the top $20\%$ of samples and the lowest uplift in the bottom $20\%$ of samples, which validates its exceptional ranking capability for treatment effect stratification.
 On the Production dataset, only DESCN achieves comparable performance to CHAUN through domain-specific assumptions and customized loss designs, with gains being highly dataset-sensitive. Compared with the baseline DragonNet, CHAUN achieves QINI improvements ranging from $3\%$ to $25.6\%$ across three distinct datasets, demonstrating robust performance gains under varying data conditions.

For the CRITEO-UPLIFT and LAZADA datasets, where treatment assignment is fully determined by observed covariates, we randomly selected subsets of features and employed permutation tests to verify that they simultaneously affected both treatment assignment and potential outcomes, thereby ensuring that they indeed constituted confounders. We then masked these features to simulate unobserved confounding scenarios. Subsequently, we evaluated the performance of IPS, RD-IPS, and RA-IPS using two architectures: 1) a network with shared embedding layers and separate heads for propensity score, treated outcome, and control outcome prediction (architecturally analogous to DragonNet but distinguished by a multi-layer propensity score head, we formally denote this framework as BaseNet); 2) the proposed CHAUN. As analytically demonstrated in \cref{RA-IPS}, the weight configurations constructed in the RD-IPS method do not align with the worst-case scenario within the theoretical bounds of the confounding assumptions controlled by $\Gamma$. Instead, they amplify individual sample weights based on their nominal propensity scores. This approach effectively amplifies the scale of the IPS loss, which may empirically enhance practical optimization dynamics, but fundamentally deviates from the theoretically grounded adversarial robustness framework. Therefore, as demonstrated by the average metrics computed through repeated experiments in \cref{performance RA-ips}, RD-IPS exhibits no significant improvement over IPS across all benchmarks. In contrast, our proposed RA-IPS method rigorously identifies and optimizes against the worst-case scenarios induced by unobserved confounders affecting treatment assignment. Under most evaluation metrics, RA-IPS achieves significant performance improvements during causal effect estimation, attaining up to a $5.4\%$ enhancement in QINI compared to IPS.

\begin{table*}[ht]
  \centering
  \caption{Performance comparison of IPS, RD-IPS, and RA-IPS methods. All experiments were conducted with 5 random seeds, with metric averages computed for robustness evaluation. Best results of each benchmark are in \cellcolor{lightblue}{\textbf{bold}}.}
  \vspace{-9pt}
  \label{performance RA-ips}
  \begin{adjustbox}{width=0.96\textwidth, keepaspectratio}
  \small 
\begin{tabularx}{\textwidth}{
    >{\hsize=2.2\hsize\centering\arraybackslash}X  
    *{3}{                                         
    | >{\hsize=0.9\hsize\centering\arraybackslash}X  
      >{\hsize=0.9\hsize\centering\arraybackslash}X  
      >{\hsize=0.9\hsize\centering\arraybackslash}X  
      >{\hsize=0.9\hsize\centering\arraybackslash}X }}
\toprule[1.2pt]
\multirow{2}{*}{\textbf{Method}} & 
\multicolumn{4}{c|}{\textbf{CRITEO-UPLIFT-Masked}} & 
\multicolumn{4}{c|}{\textbf{LAZADA-Masked}} & 
\multicolumn{4}{c}{\textbf{Production}}\\
\cmidrule(lr){2-5} \cmidrule(lr){6-9} \cmidrule(lr){10-13}
& LIFT$@30$ & AUUC & QINI & PUC & LIFT$@30$ & AUUC & QINI & PUC & LIFT$@30$ & AUUC & QINI & PUC \\
\midrule 
BaseNet+IPS & 0.3564 & 0.1332 & 0.1592 & 0.1376 & 0.0081 & 0.0030 & 0.0215 & \cellcolor{lightblue}{\textbf{0.0044}} & \cellcolor{lightblue}{\textbf{0.0230}} & 0.0119 & 0.0107 & 0.0113 \\
BaseNet+RD-IPS & 0.3568 & 0.1335 & 0.1597 & 0.1378 & 0.0082 & 0.0030 & 0.0215 & \cellcolor{lightblue}{\textbf{0.0044}} & 0.0227 & 0.0117 & 0.0106 & 0.0110 \\
BaseNet+RA-IPS & \cellcolor{lightblue}{\textbf{0.3580}} & \cellcolor{lightblue}{\textbf{0.1356}} & \cellcolor{lightblue}{\textbf{0.1621}} & \cellcolor{lightblue}{\textbf{0.1401}} & \cellcolor{lightblue}{\textbf{0.0083}} & \cellcolor{lightblue}{\textbf{0.0031}} & \cellcolor{lightblue}{\textbf{0.0217}} & \cellcolor{lightblue}{\textbf{0.0044}} & 0.0229 & \cellcolor{lightblue}{\textbf{0.0121}} & \cellcolor{lightblue}{\textbf{0.0109}} & \cellcolor{lightblue}{\textbf{0.0114}} \\
\midrule
CHAUN+IPS & 0.3612 & 0.1390 & 0.1658 & 0.1432 & 0.0083 & 0.0031 & 0.0221 & \cellcolor{lightblue}{\textbf{0.0047}} & 0.0257 & 0.0144 & 0.0130 & 0.0144\\
CHAUN+RD-IPS & 0.3619 & 0.1398 & 0.1668 & 0.1444 & 0.0083 & 0.0031 & 0.0219 & 0.0044 & 0.0256 & 0.0142 & 0.0129 & 0.0141\\
CHAUN+RA-IPS & \cellcolor{lightblue}{\textbf{0.3637}} & \cellcolor{lightblue}{\textbf{0.1430}} & \cellcolor{lightblue}{\textbf{0.1704}} & \cellcolor{lightblue}{\textbf{0.1474}} & \cellcolor{lightblue}{\textbf{0.0084}} & \cellcolor{lightblue}{\textbf{0.0033}} & \cellcolor{lightblue}{\textbf{0.0230}} & 0.0045 & \cellcolor{lightblue}{\textbf{0.0263}} & \cellcolor{lightblue}{\textbf{0.0151}} & \cellcolor{lightblue}{\textbf{0.0136}} & \cellcolor{lightblue}{\textbf{0.0156}}\\
\bottomrule[1.2pt]
\end{tabularx}
\end{adjustbox}
\end{table*}

\begin{figure}[t]
  \centering
  \includegraphics[
    width=0.49\textwidth,      
    trim={0pt 0pt 0pt 0pt}, 
    clip                      
  ]{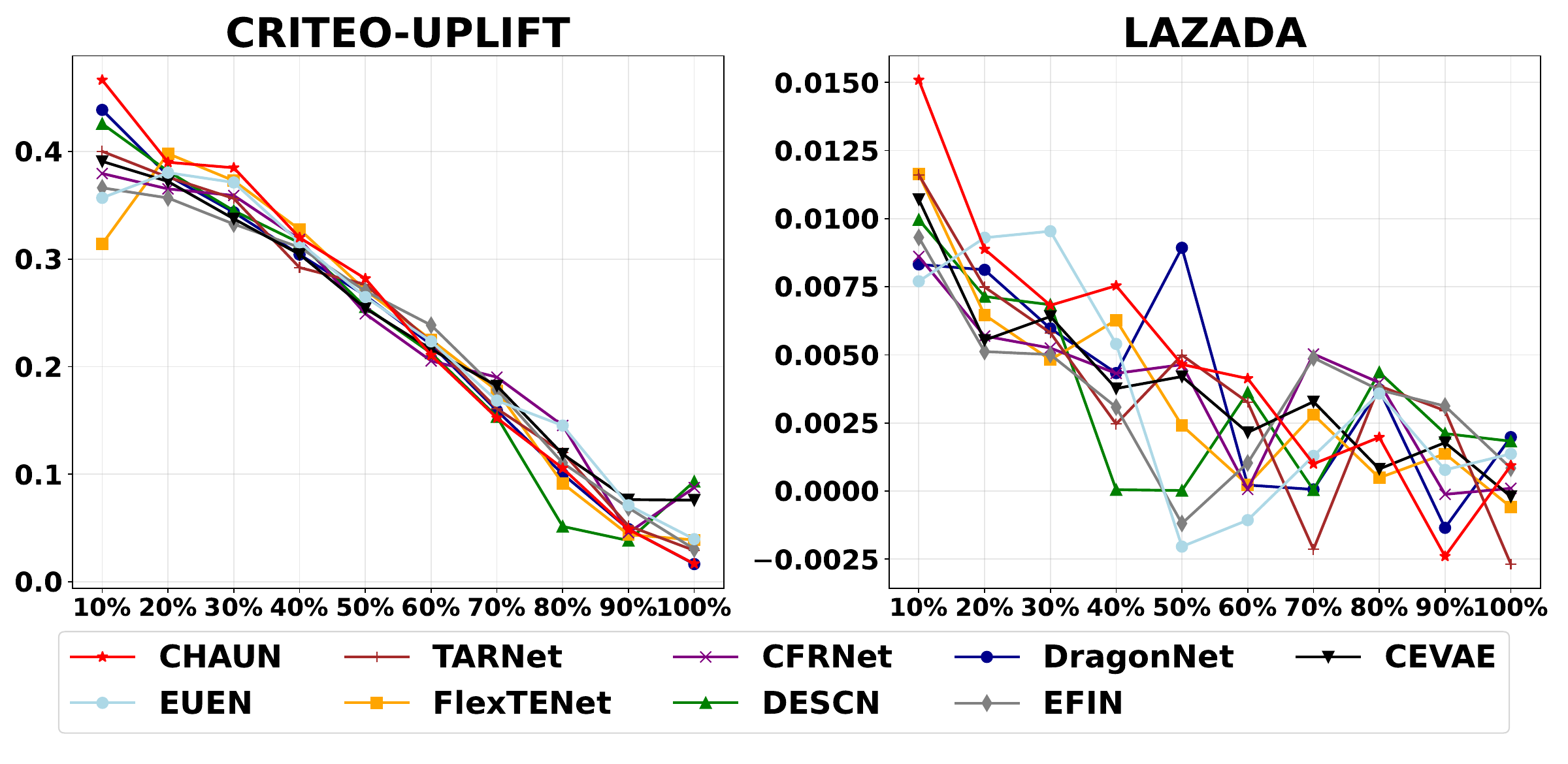}  
  \vspace{-5pt}
  \caption{The uplift curve on CRITEO-UPLIFT and LAZADA datasets. Samples are sorted in descending order based on their predicted uplift scores and divided into 10 decile groups. Then we calculate the actual uplift within each group. An ideal uplift curve exhibits strict monotonicity (steadily decreasing uplift across ranked deciles) and high discriminatory power, maximizing treatment benefits in top-ranked subgroups while minimizing harm in low-response groups.
}
  \label{curve}
\end{figure}

\subsection{In-Depth Analysis}

\subsubsection{Ablation Study}
\begin{table*}[h]  
  \centering
  \caption{Ablation study of CHAUN. BaseNet denotes a simplified variant of CHAUN with the attention module removed, while MMoE replaces the attention weighting mechanism with task-specific gating operations. All experiments were conducted with 5 random seeds, with metric averages computed for robustness evaluation. The best results of each benchmark are in \cellcolor{lightblue}{\textbf{bold}}.}
  \vspace{-9pt}
  \label{performance ablation}
  \begin{adjustbox}{width=0.98\textwidth, keepaspectratio}
  \small 
\begin{tabularx}{\textwidth}{
    >{\hsize=3.4\hsize\centering\arraybackslash}X
    *{3}{
    | >{\hsize=0.8\hsize\centering\arraybackslash}X  
      >{\hsize=0.8\hsize\centering\arraybackslash}X  
      >{\hsize=0.8\hsize\centering\arraybackslash}X
      >{\hsize=0.8\hsize\centering\arraybackslash}X  }}
\toprule[1.2pt]
\multirow{2}{*}{\textbf{Method}} & 
\multicolumn{4}{c|}{\textbf{CRITEO-UPLIFT}} & 
\multicolumn{4}{c|}{\textbf{LAZADA}} & 
\multicolumn{4}{c}{\textbf{Production}}\\
\cmidrule(lr){2-5} \cmidrule(lr){6-9} \cmidrule(lr){10-13}
& LIFT$@30$ & AUUC & QINI & PUC & LIFT$@30$ & AUUC & QINI & PUC & LIFT$@30$ & AUUC & QINI & PUC \\
\midrule 
BaseNet (removed attention)         & 0.3861 & 0.1597 & 0.1915 & 0.1641 & 0.0083 & 0.0041 & 0.0236 & 0.0065 & 0.0230 & 0.0107 & 0.0119 & 0.0113 \\
MMoE (replaced attention)         & 0.3891 & 0.1616 & 0.1940 & \cellcolor{lightblue}\textbf{0.1664} & 0.0084 & 0.0037 & 0.0262 & 0.0057 & 0.0249 & 0.0137 & 0.0124 & 0.0131 \\
CHAUN          & \cellcolor{lightblue}{\textbf{0.3904}} & \cellcolor{lightblue}{\textbf{0.1620}} & \cellcolor{lightblue}{\textbf{0.1955}} & \cellcolor{lightblue}{\textbf{0.1664}} & \cellcolor{lightblue}{\textbf{0.0087}} & \cellcolor{lightblue}{\textbf{0.0044}} & \cellcolor{lightblue}{\textbf{0.0314}} & \cellcolor{lightblue}{\textbf{0.0091}} & \cellcolor{lightblue}{\textbf{0.0261}} & \cellcolor{lightblue}{\textbf{0.0144}} & \cellcolor{lightblue}{\textbf{0.0130}} & \cellcolor{lightblue}{\textbf{0.0144}} \\
\bottomrule[1.2pt]
\end{tabularx}
\end{adjustbox}
\end{table*}

We conduct ablation studies to validate the efficacy of the novel cross-head attention module within CHAUN. A comparative baseline involves removing the attention module, which corresponds to the aforementioned BaseNet architecture. In addition, we consider another method analogous to the weighted summation in the attention module as a baseline, Multi-gate Mixture-of-Experts (MMoE)~\cite{MMOE}. MMoE utilizes multiple expert networks and trains task-specific gating functions to compute weighted combinations of their outputs. The key distinction of CHAUN's attention mechanism lies in its explicit exploitation of inter-group similarity between the treatment and control groups, where attention weights are derived through inner product similarity computations. To ensure a fair comparison, we configure MMoE with 2 experts for potential outcome prediction and calibrate the number of layers across models to maintain comparable parameter scales. The results of the ablation experiments are presented in \cref{performance ablation}. As evidenced by the results, MMoE consistently demonstrates superior performance compared to BaseNet across the majority of experimental scenarios. Notably, CHAUN achieves the best performance across all three datasets. On the CRITEO-UPLIFT dataset with low-dimensional features and pronounced treatment effects, CHAUN performs comparably to MMoE. However, on datasets featuring high-dimensional and complex feature spaces (LAZADA and Production), CHAUN demonstrates significantly stronger advantages, achieving QINI coefficient improvements of up to $19.8\%$.

\subsubsection{Visualization of Prediction}

\begin{figure}[t]
    \centering
    \begin{subfigure}[b]{0.23\textwidth}
        \includegraphics[
            trim=17pt 0 0 10pt, 
            clip,
            width=\textwidth
        ]{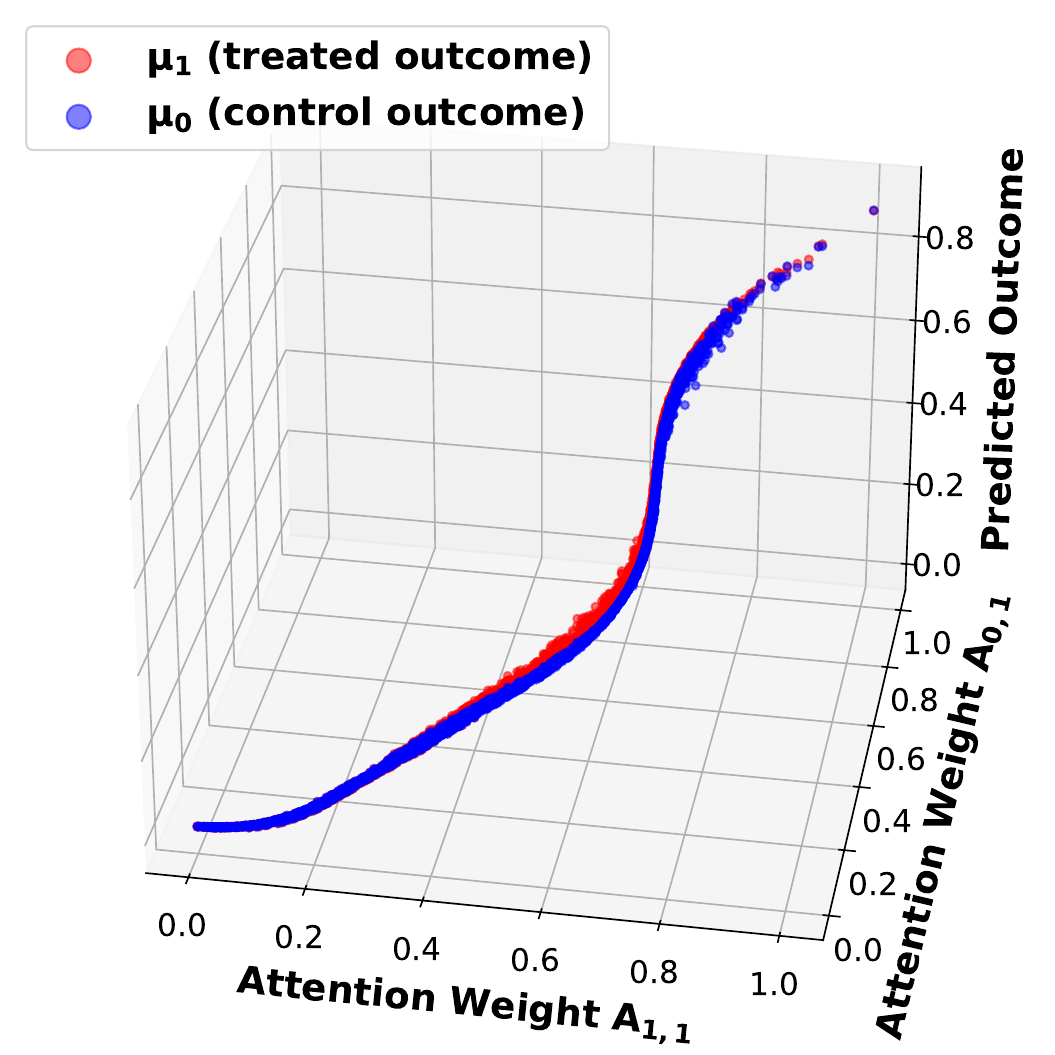}
        \caption{Predicted outcome.}
        \label{pred_out}
    \end{subfigure}
    \hfill 
    \begin{subfigure}[b]{0.23\textwidth}
        \includegraphics[width=\textwidth]{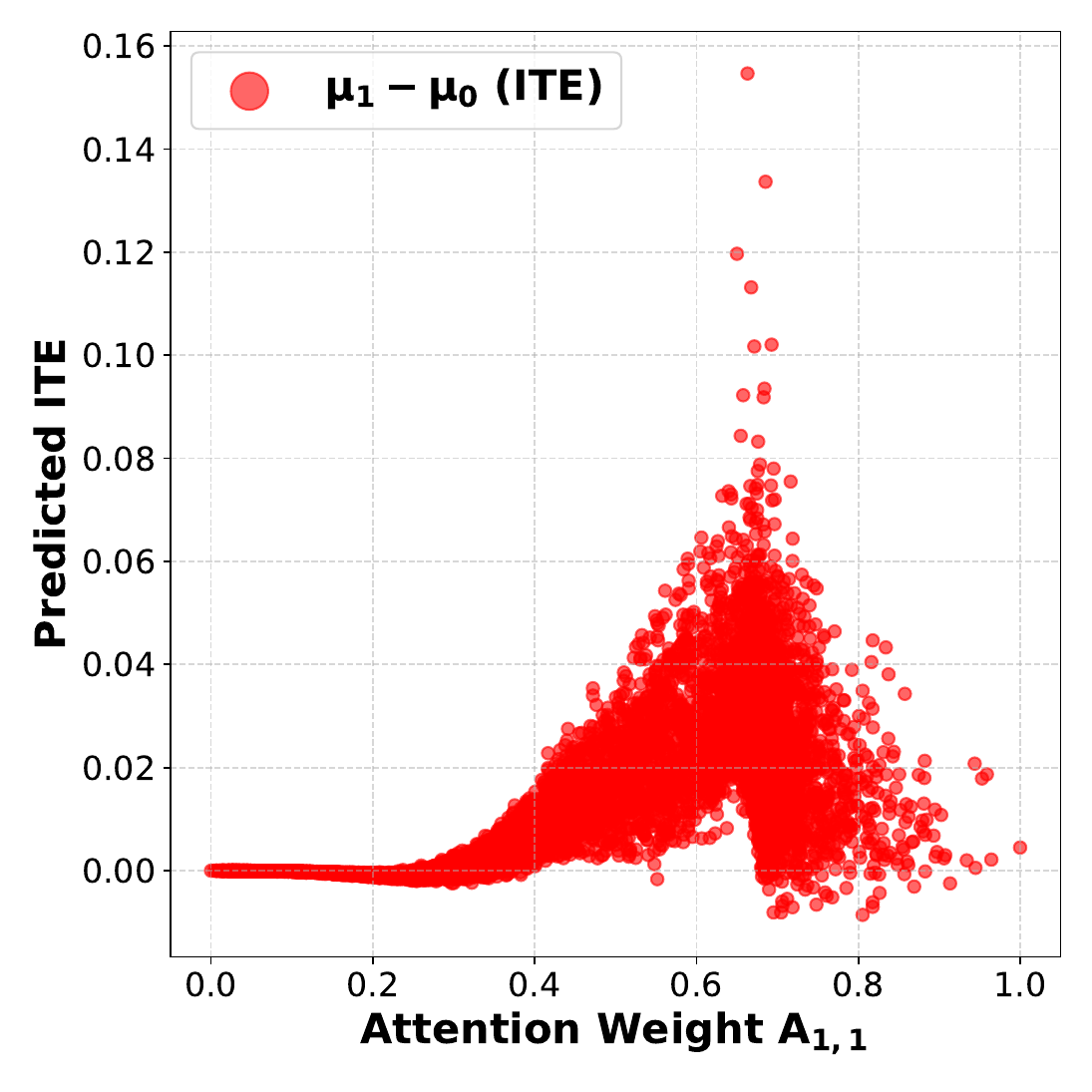}
        \caption{Predicted ITE.}
        \label{pred_ite}
    \end{subfigure}
    \caption{Scatter plots illustrating the relationship between predicted outcomes/predicted ITEs and attention weights. $\mathbf{A}_{1,1}$ denotes the attention weights where treatment group representations serve as both queries and keys, while $\mathbf{A}_{0,1}$ represent the weights where control group representations act as queries and treatment group representations as keys.}
    \label{pred}
\end{figure}
To clarify how the core attention module of CHAUN leverages inter-group similarity to enhance the discriminative power of predicted ITEs, we rigorously investigate the relationship between predicted outcomes, ITEs and computed attention weights on the Production dataset, with results visualized in \cref{pred}. As shown in \cref{pred_out}, the attention weights computed when using treatment group representations versus control group representations as queries (on x-axis and y-axis, respectively) exhibit close alignment. Furthermore, both predicted potential outcomes $\mu_0$ and $\mu_1$ increase monotonically with higher attention weights assigned to treatment group representations, i.e., greater focus on treatment group representations leads to larger predicted potential outcomes. 
Combined with \cref{pred_ite}, we observe that when attention weights heavily focus on control group representations, both $\mu_0$
and $\mu_1$ collapse near zero, leading to nearly all predicted ITEs being trivial. Conversely, when attention predominantly emphasizes treatment group representations, $\mu_0$ and $\mu_1$ exhibit large values with divergent ITEs, resembling predictions from independent model heads without inter-group interaction. Notably, the most significant ITE estimates emerge within an intermediate range of attention weights, where the model balances inter-group feature sharing and treatment-specific heterogeneity, thereby amplifying contrastive signals between counterfactual outcomes.
Based on these observations, we conclude that representations from different treatment groups are mapped to shared query spaces but distinct key and value spaces, preserving treatment-specific patterns. The attention weights dynamically calibrate inter-group interactions, enhancing ITE discriminability through amplified counterfactual contrasts.

\subsubsection{Hyperparameter Sensitivity}

\begin{figure}[t]
  \centering
  \includegraphics[
    width=0.49\textwidth,      
    trim={0pt 0pt 0pt 0pt}, 
    clip                      
  ]{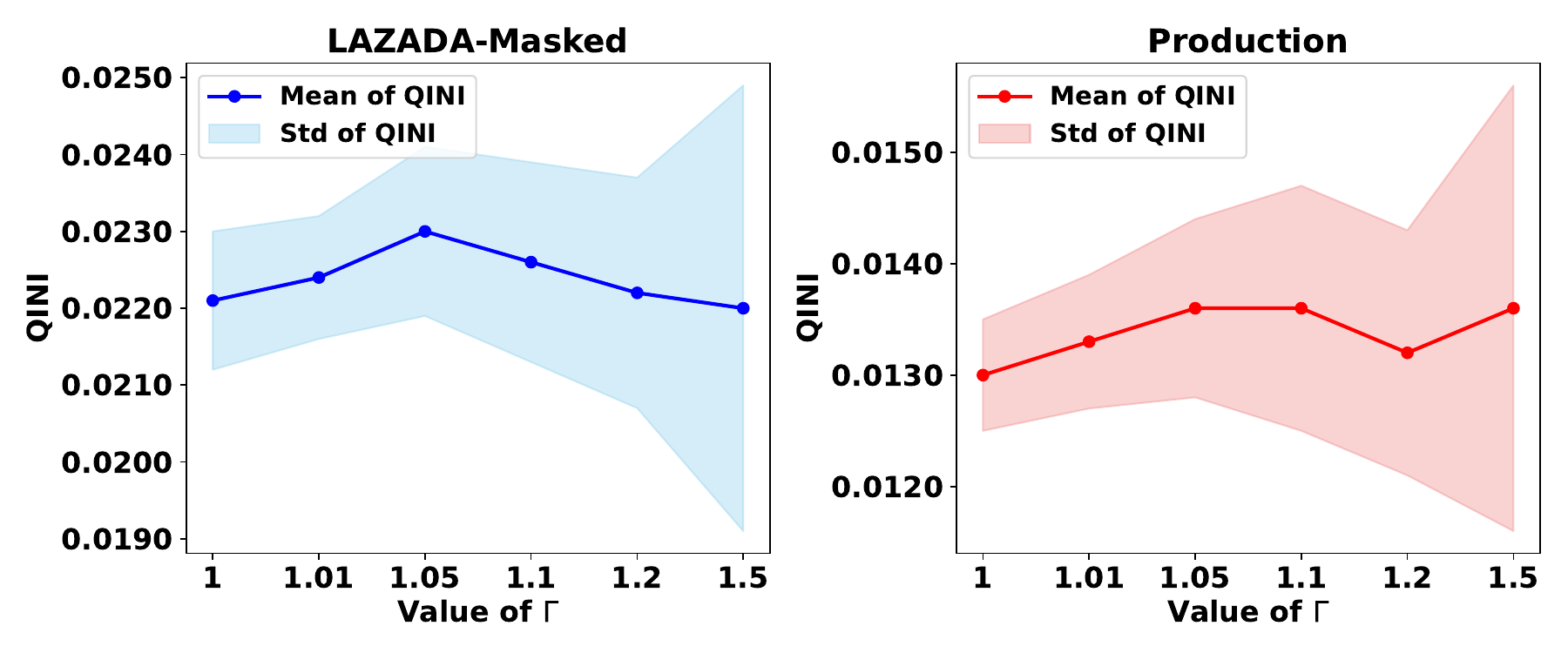}  
  \caption{Performance (QINI) of RA-IPS as the hyperparameter 
$\Gamma$ varies. The mean and standard deviation are computed from results obtained with five different random seeds. 
}
  \label{hyperfig}
\end{figure}

The RA-IPS method requires selecting an appropriate hyperparameter $\Gamma$ corresponding to the tolerable strength of unobserved confounders. We investigate how the performance of RA-IPS varies under different values of $\Gamma$, and present the results on LAZADA-Masked and Production datasets in \cref{hyperfig}. $\Gamma=1$ corresponds to the standard IPS. It can be observed that when $\Gamma$ varies within a small range, RA-IPS consistently achieves stable improvements over IPS. However, as the value of $\Gamma$ gradually increases, the performance begins to fluctuate noticeably, making it no longer guaranteed to outperform IPS, while also exhibiting a larger standard deviation. This suggests that when applying RA-IPS, if the strength of unobserved confounders is completely unknown, it is advisable to start with a relatively small $\Gamma$.

\section{Conclusions}
In this paper, we address the lack of flexible integration of inter-group correlations in uplift modeling by proposing CHAUN, a concise yet effective framework that dynamically computes inter-group attention scores to model such associations. Furthermore, for uplift scenarios with unobserved confounders, we establish that knowing the true propensity score for each unit serves as a sufficient condition to eliminate confounding bias. For practical settings where true propensity scores are unavailable, we introduce RA-IPS, which enhances robustness against unobserved confounders through adversarial learning over plausible propensity score spaces. Extensive comparative and ablation experiments demonstrate CHAUN's effectiveness as a general-purpose uplift model and RA-IPS's consistent improvements over conventional IPS methods in the presence of unobserved confounders.

\section*{Acknowledgements}
This work was supported by the NSFC Project (No.62576346), the MOE Project of Key Research
Institute of Humanities and Social Sciences (22JJD110001), the fundamental research funds for the
central universities, and the research funds of Renmin University of China (24XNKJ13), and Beijing
Advanced Innovation Center for Future Blockchain and Privacy Computing.



\appendix

\section{Theoretical Proof}
\label{proof}
\begin{theorem*}
[\cref{ITEwithpi}]
Under the assumption of the presence of unobserved confounders u with known true propensity scores $\pi(x,u)$,
the ITE can be identified through two approaches:
\begin{enumerate}[label=\arabic*),leftmargin=*,  
    labelwidth=1.5em,  
    align=left,         
    itemindent=0pt,     
    labelsep=0.5em]
\item 
\begin{equation*}
    ITE(x)=\mathbb{E}\left( \frac{ty}{\pi(x,u)}- \frac{(1-t)y}{1-\pi(x,u)}\bigg| x \right).
\end{equation*}

\item If the nominal propensity score $\tilde{\pi}(x)$ is known:
\begin{equation*}
    ITE(x)=\mathbb{E}\left(\frac{\tilde{\pi}(x)}{\pi(x,u)}y\bigg|x,t=1\right)-\mathbb{E}\left(\frac{1-\tilde{\pi}(x)}{1-\pi(x,u)}y\bigg|x,t=0\right).
\end{equation*}
\end{enumerate}
\end{theorem*}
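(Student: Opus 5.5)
The plan is to work with the latent ignorability that the setup implicitly assumes. Conditional on $(x,u)$, the treatment is independent of the potential outcomes: $y(1),y(0)\perp\!\!\!\perp t\mid x,u$. This is the natural content of saying that $u$ collects all unobserved confounders and that $\pi(x,u)$ governs assignment; I will state it explicitly at the start of the proof. I will also use Consistency and the generalized Overlap $0<\pi(x,u)<1$. The basic tool is the tower property, conditioning first on $(x,u)$ and then integrating over $p(u\mid x)$.

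For part 1), I treat the two terms separately. By Consistency, $ty=t\,y(1)$. Conditioning on $(x,u)$ and pulling out $\pi(x,u)$, which is measurable with respect to $(x,u)$, gives
\[
\mathbb{E}\!\left(\tfrac{t\,y(1)}{\pi(x,u)}\,\Big|\,x,u\right)=\tfrac{1}{\pi(x,u)}\,\mathbb{E}(t\mid x,u)\,\mathbb{E}(y(1)\mid x,u)=\mathbb{E}(y(1)\mid x,u).
\]
The factorization in the middle step uses latent ignorability, and the final step uses $\mathbb{E}(t\mid x,u)=\pi(x,u)$. Taking $\mathbb{E}(\cdot\mid x)$ then yields $\mathbb{E}(y(1)\mid x)$. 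The control term is symmetric: use $(1-t)y=(1-t)y(0)$ and $\mathbb{E}(1-t\mid x,u)=1-\pi(x,u)$ to get $\mathbb{E}(y(0)\mid x)$. Subtracting the two gives $ITE(x)$ as defined in \cref{ITEdefinition}. Overlap guarantees that the ratios are finite.

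For part 2), I reduce to part 1) by converting conditioning on $\{t=1\}$ into multiplication by $t$. For any integrable $Z$,
\[
\mathbb{E}(Z\mid x,t=1)=\frac{\mathbb{E}(tZ\mid x)}{P(t=1\mid x)}=\frac{\mathbb{E}(tZ\mid x)}{\tilde{\pi}(x)}.
\]
Here $P(t=1\mid x)=\int\pi(x,u)p(u\mid x)\,du=\tilde{\pi}(x)$, which is the identity displayed before \cref{true prop limit}. Take $Z=\tilde{\pi}(x)y/\pi(x,u)$. Because $\tilde{\pi}(x)$ depends only on $x$, it cancels against the denominator, leaving $\mathbb{E}(ty/\pi(x,u)\mid x)$, which is exactly the treated term of part 1). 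The control term works the same way, with $P(t=0\mid x)=1-\tilde{\pi}(x)$, and nominal overlap $0<\tilde{\pi}(x)<1$ makes the conditioning well defined. Combining the two terms with part 1) completes the identity.

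The main obstacle is not any calculation but making the latent ignorability assumption explicit, since the theorem's wording only says ``presence of unobserved confounders $u$.'' Without $y(k)\perp\!\!\!\perp t\mid x,u$, the factorization step in part 1) fails. I would therefore state it at the outset as the defining property of $u$, and then check that all remaining steps need only Consistency, the two overlap conditions, and measurability of $\pi$ and $\tilde{\pi}$.
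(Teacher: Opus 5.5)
Your proof is correct. Part 1) follows the paper's argument: tower property over $p(u\mid x)$, then cancel $\pi(x,u)$. The difference is that you state the latent ignorability $y(1),y(0)\perp\!\!\!\perp t\mid x,u$ explicitly, whereas the paper uses it silently in two places. In part 1) it is needed to pass from $\int t\,\mathbb{E}(y(1)\mid x,u)\,p(u,t\mid x)\,\pi(x,u)^{-1}\,du\,dt$ to $\mathbb{E}\bigl(t\,y(1)/\pi(x,u)\mid x\bigr)$, and in part 2) to write $\mathbb{E}(y(1)\mid x,u)=\mathbb{E}(y\mid x,u,t=1)$.

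For part 2) your route genuinely differs. You reduce it to part 1) through $\mathbb{E}(Z\mid x,t=1)=\mathbb{E}(tZ\mid x)/\tilde{\pi}(x)$. That reduction step is pure conditioning, and it shows the two displayed expressions are the same functional of the joint law of $(x,u,t,y)$; ignorability enters only through part 1). The paper instead derives part 2) from scratch by Bayes' rule on the confounder, $p(u\mid x)=\frac{\tilde{\pi}(x)}{\pi(x,u)}\,p(u\mid x,t=1)$, followed by the ignorability substitution above.

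Your version is shorter and makes clear that the second ``approach'' is a rewriting of the first, not an independent identification result. The paper's version buys an interpretation: $\tilde{\pi}(x)/\pi(x,u)$ is exactly the density ratio that maps the treated subpopulation's distribution of $u$ back to $p(u\mid x)$.

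One small simplification: you need not assume nominal overlap separately. Since $\tilde{\pi}(x)$ is an average of $\pi(x,u)$ over $p(u\mid x)$, having $0<\pi(x,u)<1$ for all $u$ already forces $0<\tilde{\pi}(x)<1$.
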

\begin{proof}

\begin{enumerate}[label=\arabic*),leftmargin=*,  
    labelwidth=1.5em,  
    align=left,         
    itemindent=0pt,     
    labelsep=0.5em]
\item
By the Law of Total Expectation,
\begin{equation*}
\begin{split}
  \mathbb{E}(y(1)|x) &= \int \mathbb{E}(y(1)|x,u)p(u|x) \, du \\
&= \int t\mathbb{E}(y(1)|x,u)p(u,t|x)\frac{1}{\pi(x,u)} \, du \, dt \\
&= \mathbb{E}\left( \frac{ty(1)}{\pi(x,u)} \bigg| x \right)=\mathbb{E}\left( \frac{ty}{\pi(x,u)} \bigg| x \right),
  \end{split}
\end{equation*}
analogously, we obtain: 
\begin{equation*}
\mathbb{E}(y(0)|x)=\mathbb{E}\left( \frac{(1-t)y}{1-\pi(x,u)} \bigg| x \right),
\end{equation*}
thus, 
\begin{equation*}
ITE(x)=\mathbb{E}\left( \frac{ty}{\pi(x,u)}- \frac{(1-t)y}{1-\pi(x,u)}\bigg| x \right).
\end{equation*}

\item
$\begin{aligned}[t]
    \mathbb{E}(y(1)|x) &= \int \mathbb{E}(y(1)|x,u)p(u|x) du \\
    &=\int \mathbb{E}(y|x,u,t=1)\frac{\tilde{\pi}(x)}{\pi(x,u)}p(u|x,t=1)du \\
    &=\mathbb{E}\left(\frac{\tilde{\pi}(x)}{\pi(x,u)}y\bigg|x,t=1\right),
\end{aligned}$

\vspace{6pt}
analogously, we obtain: 
\begin{equation*}
\mathbb{E}(y(0)|x)=\mathbb{E}\left(\frac{1-\tilde{\pi}(x)}{1-\pi(x,u)}y\bigg|x,t=0\right),
\end{equation*}
thus,
\begin{equation*}
ITE(x)=\mathbb{E}\left(\frac{\tilde{\pi}(x)}{\pi(x,u)}y\bigg|x,t=1\right)-\mathbb{E}\left(\frac{1-\tilde{\pi}(x)}{1-\pi(x,u)}y\bigg|x,t=0\right). \qedhere
\end{equation*}
\end{enumerate}
\end{proof}

\begin{theorem*}[\cref{unbias}]
    Under the assumption of the presence of unobserved confounders u, let $\mathcal{L}_{true-IPS}$ denote the IPS estimator using the true propensity score $\pi(x_i,u_i)$, formally expressed as 
\[
\mathcal{L}_{true-IPS}=\frac1N\sum_{i=1}^N \bigg(\frac{(1-t_i)l(\mu_0(x_i),y_i(0))}{1-\pi(x_i,u_i)}+  \frac{t_il(\mu_1(x_i),y_i(1))}{\pi(x_i,u_i)}\bigg).
\]
Then it is an unbiased estimator of $\mathcal{L}_{ideal}$.
\end{theorem*}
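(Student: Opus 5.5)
The plan is to show that the expectation of $\mathcal{L}_{true-IPS}$ over the treatment assignments equals $\mathcal{L}_{ideal}$. I take the expectation over $t_1,\dots,t_N$ conditional on $(x_i,u_i,y_i(0),y_i(1))_{i=1}^N$, which is the same sense of $\mathbb{E}_t$ used in \cref{unbias} for the nominal IPS estimator.

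By linearity and the i.i.d. sampling, it suffices to handle a single summand. For a fixed $i$, the quantities $l(\mu_0(x_i),y_i(0))$ and $l(\mu_1(x_i),y_i(1))$ depend only on $x_i$ and the potential outcomes. They are therefore fixed under this conditioning, since $\mu_0,\mu_1$ are fixed functions of $x$ for a given model.

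The key step is $\mathbb{E}(t_i\mid x_i,u_i,y_i(0),y_i(1))=\pi(x_i,u_i)$. This is the latent ignorability $y(1),y(0)\perp\!\!\!\perp t\mid x,u$ that is implicit in calling $\pi(x,u)$ the true propensity score governing assignment, and the proof of \cref{ITEwithpi} already uses it. I will state this explicitly as the working assumption, because it is the one point where an unstated hypothesis enters.

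With it, the treated term becomes
\[
\mathbb{E}\!\left(\frac{t_i\, l(\mu_1(x_i),y_i(1))}{\pi(x_i,u_i)}\,\middle|\, x_i,u_i,y_i(\cdot)\right)=\frac{\pi(x_i,u_i)}{\pi(x_i,u_i)}\,l(\mu_1(x_i),y_i(1))=l(\mu_1(x_i),y_i(1)).
\]
The control term works the same way, using $\mathbb{E}(1-t_i\mid\cdot)=1-\pi(x_i,u_i)$. The generalized overlap condition $0<\pi(x,u)<1$ guarantees that both divisions are legitimate.

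Summing over $i$ and dividing by $N$ gives exactly $\mathcal{L}_{ideal}$. If one prefers the fully unconditional sense, the tower property then gives $\mathbb{E}(\mathcal{L}_{true-IPS})=\mathbb{E}(\mathcal{L}_{ideal})$. I expect no real obstacle: the only delicate point is the conditioning set just discussed, and the rest is a one-line computation per sample.
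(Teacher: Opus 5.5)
Your proposal is correct and is essentially the paper's proof. For each $i$, the expectation of the summand over $t_i$ is $(1-\pi(x_i,u_i))\frac{l(\mu_0(x_i),y_i(0))}{1-\pi(x_i,u_i)}+\pi(x_i,u_i)\frac{l(\mu_1(x_i),y_i(1))}{\pi(x_i,u_i)}$, and summing over $i$ yields $\mathcal{L}_{ideal}$; your explicit latent-ignorability assumption $y(1),y(0)\perp\!\!\!\perp t\mid x,u$ simply states what the paper uses without saying so when it sets $\Pr(t_i=1)=\pi(x_i,u_i)$ while holding the potential outcomes fixed.
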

\begin{proof}
For each sample,
\begin{align*}
    &\mathbb{E}_{t_i}\bigg(\frac{(1-t_i)l(\mu_0(x_i),y_i(0))}{1-\pi(x_i,u_i)}+  \frac{t_il(\mu_1(x_i),y_i(1))}{\pi(x_i,u_i)}\bigg)\\&=(1-\pi(x_i,u_i))\frac{l(\mu_0(x_i),y_i(0))}{1-\pi(x_i,u_i)}+\pi(x_i,u_i)\frac{l(\mu_1(x_i),y_i(1))}{\pi(x_i,u_i)}\\
    &=l(\mu_0(x_i),y_i(0))+l(\mu_1(x_i),y_i(1))
\end{align*}
Since treatment assignments are independent across all samples, we have 
    \begin{align*}
        &\mathbb{E}_t(\mathcal{L}_{true-IPS})\\&=\frac1N\sum_{i=1}^N \mathbb{E}_{t_i}\bigg(\frac{(1-t_i)l(\mu_0(x_i),y_i(0))}{1-\pi(x_i,u_i)}+  \frac{t_il(\mu_1(x_i),y_i(1))}{\pi(x_i,u_i)}\bigg)\\
        &=\frac1N\sum_{i=1}^N(l(\mu_0(x_i),y_i(0))+l(\mu_1(x_i),y_i(1)))=\mathcal{L}_{ideal}. \qedhere
    \end{align*}
    
\end{proof}


\begin{proposition*} [\cref{true prop limit}]
    For a fixed $x$, suppose that $a(x)<\pi(x,u)<1-a(x), \forall u$, then the following inequalities each hold with probability at most $\eta$,
\begin{align*}
    |\frac{1}{\pi(x,u)}-\frac{1}{\tilde{\pi}(x)}|&\ge \frac{1-2a(x)}{2a(x)\tilde{\pi}(x)\sqrt\eta},\\
|\frac{1}{1-\pi(x,u)}-\frac{1}{1-\tilde{\pi}(x)}|&\ge \frac{1-2a(x)}{2(1-a(x))\tilde{\pi}(x)\sqrt\eta}.
\end{align*}
\end{proposition*}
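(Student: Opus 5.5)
The approach is to reduce both inequalities to a single concentration statement for the random variable $\pi(x,u)$, viewed as a function of $u\sim p(u\mid x)$ with $x$ fixed, and then convert deviations of $\pi$ into deviations of the reciprocals. The reduction uses the identity preceding the proposition,
\[
\tilde{\pi}(x)=\int \pi(x,u)\,p(u\mid x)\,du ,
\]
which says that $\tilde{\pi}(x)$ is the conditional mean of $\pi(x,u)$. The hypothesis $a(x)<\pi(x,u)<1-a(x)$ says that this random variable lives in an interval of length $1-2a(x)$. It also gives $a(x)<\tilde{\pi}(x)<1-a(x)$, which will be needed below.

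\textbf{Step 1 (concentration).} Popoviciu's inequality bounds the variance of a variable supported on an interval of length $1-2a(x)$:
\[
\mathrm{Var}\big(\pi(x,u)\mid x\big)\le \frac{(1-2a(x))^2}{4}.
\]
Chebyshev's inequality then gives
\[
P\Big(|\pi(x,u)-\tilde{\pi}(x)|\ge \delta\Big)\le \eta,\qquad \delta:=\frac{1-2a(x)}{2\sqrt\eta}.
\]
Each of the two claims will follow once I show that its event is contained in $\{|\pi-\tilde{\pi}|\ge\delta\}$.

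\textbf{Step 2 (first inequality).} Write
\[
\Big|\frac{1}{\pi}-\frac{1}{\tilde{\pi}}\Big|=\frac{|\pi-\tilde{\pi}|}{\pi\,\tilde{\pi}}\le \frac{|\pi-\tilde{\pi}|}{a(x)\,\tilde{\pi}(x)} ,
\]
using $\pi(x,u)>a(x)$. So if the left side is at least $\delta/(a(x)\tilde{\pi}(x))$, then $|\pi-\tilde{\pi}|\ge\delta$. This threshold is exactly the stated one, and Step 1 finishes the first inequality.

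\textbf{Step 3 (second inequality, the main obstacle).} Here
\[
\Big|\frac{1}{1-\pi}-\frac{1}{1-\tilde{\pi}}\Big|=\frac{|\pi-\tilde{\pi}|}{(1-\pi)(1-\tilde{\pi})}.
\]
To reach the stated threshold $\delta/((1-a(x))\tilde{\pi}(x))$ by the Step 2 argument, I would need the pointwise bound $(1-\pi)(1-\tilde{\pi})\ge(1-a(x))\tilde{\pi}(x)$. That bound is not implied by the hypotheses. The crude bound $1-\pi>a(x)$ only yields the weaker threshold $\delta/(a(x)(1-\tilde{\pi}(x)))$, and the stated threshold is smaller, hence the claim stronger, precisely because $\tilde{\pi}(x)>a(x)$.

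I would attack the gap as follows. First, split on the sign of $\pi-\tilde{\pi}$.
\begin{itemize}
\item On $\{\pi<\tilde{\pi}\}$ we have $1-\pi>1-\tilde{\pi}$. The event then forces $|\pi-\tilde{\pi}|\ge \delta(1-\tilde{\pi})^2/((1-a)\tilde{\pi})$, which can be compared with $\delta$ directly.
\item On $\{\pi\ge\tilde{\pi}\}$, I would replace Popoviciu by the sharper Bhatia--Davis bound $\mathrm{Var}(\pi\mid x)\le(\tilde{\pi}-a)(1-a-\tilde{\pi})$, and apply a one-sided (Cantelli) bound to $\pi$.
\end{itemize}
The aim is to show that the combined probability is still at most $\eta$, optimizing over $\tilde{\pi}\in(a,1-a)$.

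If that optimization fails in the middle of the range, the fallback is as follows. Bounding the second moment
\[
\mathbb{E}\Big[\Big(\tfrac{1}{1-\pi}-\tfrac{1}{1-\tilde{\pi}}\Big)^2\,\Big|\,x\Big]
\]
directly by the Bhatia--Davis variance over $a^2(1-\tilde{\pi})^2$ only recovers the symmetric threshold. In that case I expect the stated constant to be attainable only under an extra restriction on $\tilde{\pi}(x)$ relative to $a(x)$, and I would record that restriction explicitly.
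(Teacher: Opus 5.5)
\textbf{Steps 1 and 2.} These are the paper's own proof of the first inequality. The paper takes the conditional expectation of $(\pi-a)(\pi-(1-a))\le 0$ to get $\mathrm{Var}(\pi\mid x)\le(\tilde\pi-a)(1-a-\tilde\pi)$, which is the Bhatia--Davis bound. It maximizes this at $\tilde\pi=\tfrac12$, giving your Popoviciu constant $(\tfrac12-a)^2$. It then applies Chebyshev after $|\pi-\tilde\pi|\ge\epsilon\,\pi\tilde\pi\ge\epsilon\,a\tilde\pi$.

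\textbf{The gap is Step 3.} You leave the second inequality unproved. No sharpening of the tail bound (sign split, Bhatia--Davis, Cantelli, optimizing over $\tilde\pi$) can prove it, because as stated it is false. Take $a(x)=0.1$ and let $\pi(x,u)$ equal $0.2$ or $0.8$ with conditional probability $\tfrac12$ each. Then $\tilde\pi(x)=\tfrac12$ and the hypothesis holds. With $\eta=\tfrac14$ the stated threshold is $\frac{0.8}{2\cdot 0.9\cdot 0.5\cdot 0.5}=\frac{16}{9}$. On $\{\pi=0.8\}$, however, $\bigl|\frac{1}{1-\pi}-\frac{1}{1-\tilde\pi}\bigr|=5-2=3$. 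So the event has probability $\tfrac12>\eta$.

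Note that here $|\pi-\tilde\pi|=0.3$ is well below your $\delta=0.8$, so the containment you need in Step 1 fails outright. On $\{\pi>\tilde\pi\}$ the factor $1-\pi$ can approach $a$, and then small deviations of $\pi$ already give large reciprocal gaps. This is a feature of the distribution, not slack in Chebyshev.

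\textbf{What the paper does, and the fix.} The paper dispatches this case with \emph{analogously}. The analogous argument is exactly your crude bound: Step 2 applied to $1-\pi(x,u)$, which also lies in $(a(x),1-a(x))$ and has conditional mean $1-\tilde\pi(x)$. It yields the threshold $\frac{1-2a(x)}{2a(x)(1-\tilde\pi(x))\sqrt\eta}$, which equals $16$ in the example, above the maximal deviation $3$. Since $(1-a)\tilde\pi-a(1-\tilde\pi)=\tilde\pi-a>0$, the stated denominator always makes the claim strictly stronger than what the method gives; it appears to be a slip in the paper. Rather than hunting for an extra restriction on $\tilde\pi(x)$, prove the corrected inequality by this symmetry argument. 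Then record, with the counterexample, that the stated constant does not hold in general.
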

\begin{proof}
Since $a(x)<\pi(x,u)<1-a(x)$, we have
\begin{align}
    &(\pi(x,u)-a(x))(\pi(x,u)-(1-a(x)))\le0 \notag\\
    \quad\Leftrightarrow\quad &\pi(x,u)^2-\pi(x,u)+a(x)(1-a(x))\le0 \label{e2}
\end{align}
Taking the conditional expectation of \cref{e2} with respect to $x$, we obtain:
\begin{align}
    &\mathbb{E}_{u|x}(\pi(x,u)^2)\le \tilde{\pi}(x)-a(x)(1-a(x)) \notag\\
    \quad\Leftrightarrow\quad &\text{Var}_{u|x}(\pi(x,u))\le - \tilde{\pi}(x)^2+\tilde{\pi}(x)-a(x)(1-a(x)) \label{varu}
\end{align}
The right-hand side of \cref{varu} attains its maximum $(a(x)-\frac12)^2$ at $\tilde{\pi}(x)=\frac12 $. Notice that
\begin{align*}
    &|\frac{1}{\pi(x,u)}-\frac{1}{\tilde{\pi}(x)}|\ge\epsilon \\
    \quad\Leftrightarrow\quad &|\pi(x,u)-\tilde{\pi}(x)|\ge \epsilon \pi(x,u) \tilde{\pi}(x) \\
    \Rightarrow\quad &|\pi(x,u)-\tilde{\pi}(x)|\ge \epsilon a(x) \tilde{\pi}(x)
\end{align*}
By Chebyshev's Inequality, we have
\begin{align*}
    \Pr(|\frac{1}{\pi(x,u)}-\frac{1}{\tilde{\pi}(x)}|\ge\epsilon)\le \Pr(|\pi(x,u)-\tilde{\pi}(x)|\ge \epsilon a(x) \tilde{\pi}(x)) \\
     \le \frac{\text{Var}_{u|x}(\pi(x,u))}{(\epsilon a(x) \tilde{\pi}(x)))^2}
    \le \bigg(\frac{\frac12-a(x)}{\epsilon a(x) \tilde{\pi}(x))}\bigg)^2.
\end{align*}
Letting $\eta = (\frac{\frac12-a(x)}{\epsilon a(x) \tilde{\pi}(x))})^2$, then we have with probability at most $\eta$, 
\begin{align*}
    |\frac{1}{\pi(x,u)}-\frac{1}{\tilde{\pi}(x)}|&\ge \frac{1-2a(x)}{2a(x)\tilde{\pi}(x)\sqrt\eta}.
\end{align*}
The other inequality can be analogously proved following the same methodology.
\end{proof}



\begin{theorem*}
[\cref{bound}]
Suppose that the true weight configurations lie within the uncertainty set in \cref{set}, and $l_i <C_1,w_i<C_2$. For any $f_\phi \in \mathcal{F}$, we have that with probabilty at least $1-\eta$,
\begin{equation*}
    |\mathcal{L}_{RA-IPS}(\phi) - \mathcal{L}_{ideal}(\phi)| 
    \leq 2 C_2 \mathcal{R}(\mathcal{F}) + C_1 C_2(1+ \sqrt{\frac{\log(2/\eta)}{2N}})
\end{equation*}
\end{theorem*}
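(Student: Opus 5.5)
We bound the gap by inserting the true-weight estimator $\mathcal{L}_{true-IPS}$ from \cref{unbias} as an intermediate quantity. Using the unified notation $l_i=l(\mu_{t_i}(x_i),y_i(t_i))$, we can write $\mathcal{L}_{true-IPS}(\phi)=\frac1N\sum_i w_i l_i$ and $\mathcal{L}_{RA-IPS}(\phi)=\frac1N\sum_i \hat w_i l_i$. The triangle inequality then gives
\[
|\mathcal{L}_{RA-IPS}-\mathcal{L}_{ideal}|\le \Big|\frac1N\sum_{i=1}^N(\hat w_i-w_i)l_i\Big| + |\mathcal{L}_{true-IPS}-\mathcal{L}_{ideal}|.
\]
The first term is a weight-misspecification term. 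The second is a statistical deviation term whose expectation vanishes by \cref{unbias}.

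For the misspecification term, we use that both $\hat w$ and the true weights lie in the uncertainty set \cref{set}. Both are therefore bounded in $[0,C_2]$, and we may take $|\hat w_i-w_i|\le C_2$. Since $0\le l_i\le C_1$, the term is at most $C_1C_2$, which is the constant ``1'' part of the stated bound.

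One could try to sharpen this with the average constraint $\frac1N\sum_i(w_i-\tilde w_i)\le\epsilon_N$. However, the loss weighting is not uniform, so that constraint does not directly control $\sum_i(\hat w_i-w_i)l_i$. We therefore settle for the crude bound $C_1C_2$, which is exactly what the statement allows.

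For the deviation term, we treat $\phi\mapsto \frac1N\sum_i w_i l_i(\phi)$ as an empirical mean of i.i.d.\ bounded variables $w_il_i\in[0,C_1C_2]$. By \cref{unbias}, its expectation over $t$ equals $\mathcal{L}_{ideal}$, where we take $\mathcal{L}_{ideal}$ to be its population counterpart.

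The bound follows by the standard symmetrization argument. First, McDiarmid's inequality applied to $\sup_{f\in\mathcal{F}}|\frac1N\sum_i w_il_i-\mathbb{E}[\cdot]|$, whose bounded differences are $C_1C_2/N$, gives deviation $C_1C_2\sqrt{\log(2/\eta)/(2N)}$ with probability at least $1-\eta$; the two-sided version accounts for the $2/\eta$. Second, symmetrization bounds the expected supremum by twice the Rademacher complexity of the weighted loss class.

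The main obstacle is passing from the Rademacher complexity of the weighted class $\{w_il(f(x_i,t_i),y_i)\}$ to $C_2\mathcal{R}(\mathcal{F})$ as defined. We would handle this with a contraction argument. Each coordinate is multiplied by a fixed scalar $w_i\in[0,C_2]$, and $z\mapsto w_iz$ is $C_2$-Lipschitz, so the Ledoux--Talagrand contraction lemma, applied coordinatewise, yields $\mathbb{E}_\sigma\sup_f\frac1N\sum_i\sigma_iw_il_i\le C_2\mathcal{R}(\mathcal{F})$. This bounds the deviation term by $2C_2\mathcal{R}(\mathcal{F})+C_1C_2\sqrt{\log(2/\eta)/(2N)}$.

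Adding the misspecification bound $C_1C_2$ gives the claimed inequality. Because $\mathcal{R}(\mathcal{F})$ is empirical, strictly one needs an additional McDiarmid step to replace the expected complexity by the empirical one. We would either absorb this into the same confidence term by a union bound with a slightly adjusted constant, or note that the theorem is stated loosely enough to accommodate it.
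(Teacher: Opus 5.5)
Your argument follows the paper's proof step for step. It uses the same triangle inequality through $\mathcal{L}_{true-IPS}$, the same crude $C_1C_2$ bound on the weight-mismatch term, and the same McDiarmid, symmetrization and contraction chain on the deviation term.

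Your two-sided bound $|\hat w_i-w_i|\le C_2$ for the first term is cleaner than the paper's version. The paper drops the absolute value by asserting $\mathcal{L}_{RA-IPS}\ge\mathcal{L}_{true-IPS}$, but that inequality is not guaranteed: $\hat W$ maximizes a $\lambda$-regularized objective over $\mathcal{W}$, not the weighted loss over the constrained set. However, membership in the uncertainty set does not give $\hat w_i\le C_2$. It gives only $\hat w_i\le b(x_i,t_i)$, and combining $w_i\ge a(x_i,t_i)$ with $w_i<C_2$ yields merely $\hat w_i<1+\Gamma^2(C_2-1)$. You need the explicit hypothesis $\hat w_i\le C_2$. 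It appears in the main-text version of the theorem, and the paper's proof also uses it.

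The step that does not prove the statement as written is taking $\mathcal{L}_{ideal}$ to be a population risk. In the paper, $\mathcal{L}_{ideal}$ is the empirical average $\frac1N\sum_i\bigl(l(\mu_0(x_i),y_i(0))+l(\mu_1(x_i),y_i(1))\bigr)$ over the same $N$ units. The unbiasedness theorem identifies it with $\mathbb{E}_t[\mathcal{L}_{true-IPS}]$, the expectation over $t$ alone. Centering at the full population mean bounds the distance to a different target. Returning to the empirical $\mathcal{L}_{ideal}$ would cost an additional concentration term, so the stated constants would not follow.

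The repair stays inside your framework: condition on $\{(x_i,u_i,y_i(0),y_i(1))\}_{i=1}^N$. Then:
\begin{itemize}
\item the $t_i$ are independent (though not identically distributed) Bernoulli$(\pi(x_i,u_i))$ variables;
\item $\mathcal{L}_{ideal}$ is a constant equal to the conditional mean of $\mathcal{L}_{true-IPS}$;
\item flipping one $t_i$ moves $\mathcal{L}_{true-IPS}$ by at most $C_1C_2/N$, so McDiarmid applies with your constant;
\item symmetrization holds for independent, non-identically distributed summands.
\end{itemize}
This recovers exactly the stated constants. With the empirical definition of $\mathcal{L}_{ideal}$, this conditioning is also what legitimizes the paper's own McDiarmid step, which tracks only the change in $\mathcal{L}_{true-IPS}$. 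The residual issue you flag, passing from the expected to the empirical Rademacher complexity, is glossed over in the paper as well.
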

\begin{proof}
$
    |\mathcal{L}_{RA-IPS}(\phi)-\mathcal{L}_{ideal}(\phi)|\le|\mathcal{L}_{RA-IPS}(\phi)-\mathcal{L}_{true-IPS}(\phi)|\\+|\mathcal{L}_{true-IPS}(\phi)-\mathcal{L}_{ideal}(\phi)|.
$
Next, we derive upper bounds for each of these two terms individually. Since the true weighting combination lies within our predefined uncertainty set, for the first term we have
    $|\mathcal{L}_{RA-IPS}(\phi)-\mathcal{L}_{true-IPS}(\phi)|=\mathcal{L}_{RA-IPS}(\phi)-\mathcal{L}_{true-IPS}(\phi)
    =\frac1N\sum_{i=1}^{N} (\hat{w}_i-w_i)l_i
    \le C_1C_2$
We now bound the second term $|\mathcal{L}_{\text{true-IPS}}(\phi) - \mathcal{L}_{\text{ideal}}(\phi)|$. By our assumption, the per-sample loss and the true weights are bounded: $l_i \leq C_1$ and $w_i \leq C_2$. Therefore, each weighted loss term $w_i l_i$ is bounded by $M = C_1 C_2$.

Consider the random variable $\Phi(\mathcal{D}) = \sup_{f_\psi \in \mathcal{F}} |\mathcal{L}_{ideal}(f_\psi) - \mathcal{L}_{true-IPS}(f_\psi)|$. Replacing any single data point $(x_i, t_i, y_i)$ in the sample set $\mathcal{D}$ can change $\mathcal{L}_{true-IPS}(f_\psi)$ by at most $M/N$, and this bounded difference property holds for the supremum $\Phi(\mathcal{D})$. By McDiarmid’s Inequality~\cite{mcdiarmid}, with probability at least $1 - \eta$,
\[
\Phi(\mathcal{D}) \leq \mathbb{E}_\mathcal{D}[\Phi(\mathcal{D})] + M \sqrt{\frac{\log(2/\eta)}{2N}}.
\]

By Symmetrization Lemma~\cite{radamacher}, the expected uniform deviation is bounded by twice the Rademacher complexity of the weighted loss class. Since the nominal weights are bounded by $C_2$, the contraction property of Rademacher complexity implies that this quantity is at most $2 C_2 \mathcal{R}(\mathcal{F})$. Hence,
\[
\mathbb{E}_{\mathcal{D}}[\Phi(\mathcal{D})] \leq 2 C_2 \mathcal{R}(\mathcal{F}).
\]

Combining these results, we conclude that with probability at least $1 - \eta$,
\begin{equation*}
    |\mathcal{L}_{RA-IPS}(\phi) - \mathcal{L}_{ideal}(\phi)| 
    \leq 2 C_2 \mathcal{R}(\mathcal{F}) + C_1 C_2(1+ \sqrt{\frac{\log(2/\eta)}{2N}}).
\end{equation*}

\end{proof}

\bibliographystyle{ACM-Reference-Format}
\bibliography{sample-base}

\end{document}